\documentclass{article}
\pdfoutput=1
% if you need to pass options to natbib, use, e.g.:
%     \PassOptionsToPackage{numbers, compress}{natbib}
\PassOptionsToPackage{numbers, compress, sort}{natbib}
% before loading neurips_2019

% ready for submission
% \usepackage{neurips_2019}

% to compile a preprint version, e.g., for submission to arXiv, add add the
% [preprint] option:
%     \usepackage[preprint]{neurips_2019}

% to compile a camera-ready version, add the [final] option, e.g.:
     \usepackage[final]{neurips_2019}

% to avoid loading the natbib package, add option nonatbib:
%     \usepackage[nonatbib]{neurips_2019}

\usepackage[utf8]{inputenc} % allow utf-8 input
\usepackage[T1]{fontenc}    % use 8-bit T1 fonts
\usepackage{hyperref}       % hyperlinks
\usepackage{url}            % simple URL typesetting
\usepackage{booktabs}       % professional-quality tables
\usepackage{amsfonts}       % blackboard math symbols
\usepackage{nicefrac}       % compact symbols for 1/2, etc.
\usepackage{microtype}      % microtypography
\usepackage{wrapfig}
\usepackage{algorithm,algorithmic}
\usepackage{graphicx, amssymb,amsmath,amsthm,color,bm}
\renewcommand{\[}{\begin{eqnarray}}
\renewcommand{\]}{\end{eqnarray}}
\newtheorem{theorem}{Theorem}
\renewcommand{\P}{\mathbb{P}}
\newcommand{\E}{\mathbb{E}}
\newcommand{\R}{\mathbb{R}}

\newcommand{\zdirect}{z^{*}(\epsilon)}
\newcommand{\zopt}{z^{*}}

\def\1{\bm{1}}

\usepackage{array}
\newcolumntype{P}[1]{>{\centering\arraybackslash}p{#1}}
\newcolumntype{M}[1]{>{\centering\arraybackslash}m{#1}}

\title{Direct Optimization through $\arg \max$ for Discrete Variational Auto-Encoder}

% The \author macro works with any number of authors. There are two commands
% used to separate the names and addresses of multiple authors: \And and \AND.
%
% Using \And between authors leaves it to LaTeX to determine where to break the
% lines. Using \AND forces a line break at that point. So, if LaTeX puts 3 of 4
% authors names on the first line, and the last on the second line, try using
% \AND instead of \And before the third author name.

\author{%
  Guy Lorberbom\\
  Technion\\
  \And
  Andreea Gane\\
  MIT\\
  \And
  Tommi Jaakkola\\
  MIT\\
  \And
  Tamir Hazan\\
  Technion\\
  % examples of more authors
  % \And
  % Coauthor \\
  % Affiliation \\
  % Address \\
  % \texttt{email} \\
  % \AND
  % Coauthor \\
  % Affiliation \\
  % Address \\
  % \texttt{email} \\
  % \And
  % Coauthor \\
  % Affiliation \\
  % Address \\
  % \texttt{email} \\
  % \And
  % Coauthor \\
  % Affiliation \\
  % Address \\
  % \texttt{email} \\
}

\begin{document}

\maketitle

\begin{abstract}
Reparameterization of variational auto-encoders with continuous random variables is an effective method for reducing the variance of their gradient estimates. In the discrete case, one can perform reparametrization using the Gumbel-Max trick, but the resulting objective relies on an $\arg \max$ operation and is non-differentiable. In contrast to previous works which resort to \emph{softmax}-based relaxations, we propose to optimize it directly by applying the \emph{direct loss minimization} approach. Our proposal extends naturally to structured discrete latent variable models when evaluating the $\arg \max$ operation is tractable. We demonstrate empirically the effectiveness of the direct loss minimization technique in variational autoencoders with both unstructured and structured discrete latent variables.

% Reparameterization of variational auto-encoders with continuous random variables is an effective method for reducing the variance of their gradient estimates. Our work optimizes the discrete VAE objective directly, using its Gumbel-Max reparameterization, by applying the direct loss minimization technique to generative models.  This optimization technique propagates gradients through the reparameterized $\arg \max$, which are estimated by the difference of gradients of two $\arg \max$ predictions. This realization provides the means to learn latent representations in cases when evaluating the $\arg \max$ operation is tractable while evaluating the softmax operation is intractable. 
%In this work we reparameterize discrete variational auto-encoders using the Gumbel-Max perturbation model that represents the Gibbs distribution using the $\arg \max$ of randomly perturbed encoder. We subsequently apply the direct loss minimization technique to propagate gradients through the reparameterized $\arg \max$. The resulting gradient is estimated by the difference of the encoder gradients that are evaluated in two $\arg \max$ predictions.
\end{abstract}

%!TEX root = neurips_2019.tex

\section{Introduction}

Models with discrete latent variables drive extensive research in machine learning applications, including language classification and generation \citep{yogatama2016learning, hu2017toward, Shen18}, molecular synthesis \citep{kusner2017grammar}, or game solving \citep{mena2018learning}. Compared to their continuous counterparts, discrete latent variable models can decrease the computational complexity of inference calculations, for instance, by discarding alternatives in hard attention models \citep{lawson2017learning}, they can improve interpretability by illustrating which terms contributed to the solution \citep{mordatch2017emergence, yogatama2016learning}, and they can facilitate the encoding of inductive biases in the learning process, such as images consisting of a small number of objects \citep{eslami2016attend} or tasks requiring intermediate alignments \citep{mena2018learning}. Finally, in some cases, discrete latent variables are natural choices, for instance when modeling datasets with discrete classes \citep{Rolfe16, Jang16, Maddison16}.

Performing maximum likelihood estimation of latent variable models is challenging due to the requirement to marginalize over the latent variables. Instead, one can maximize a variational lower-bound to the data log-likelihood, defined via an (approximate) posterior distribution over the latent variables, an approach followed by latent Dirichlet allocation \cite{blei2003latent}, learning hidden Markov models \cite{rabiner1986introduction} and variational auto-encoders \cite{kingma2013auto}. The maximization can be carried out by alternatively computing the (approximate) posterior distribution corresponding to the current model parameters estimate, and estimating the new model parameters. Variational auto-encoders (VAEs) are generative latent variable models where the approximate posterior is a (neural network based) parameterized distribution which is estimated jointly with the model parameters. Maximization is performed via stochastic gradient ascent, provided that one can compute gradients with respect to both the model parameters and the approximate posterior parameters.

Learning VAEs with discrete $n$-dimensional latent variables is computationally challenging since the size of the support of the posterior distribution is exponential in $n$. Although the score function estimator (also known as REINFORCE) \cite{Williams92} enables computing the required gradients with respect to the approximate posterior, in both the continuous and discrete latent variable case, it is known to have high-variance. The reparametrization trick provides an appealing alternative to the score function estimator and recent work has shown its effectiveness for continuous latent spaces \citep{Kingma13, Rezende14}. In the discrete case, despite being able to perform reparametrization via the Gumbel-Max trick, the resulting mapping remains non-differentiable due to the presence of $\arg\max$ operations. Recently, Maddison et al. \cite{Maddison16} and Jang et al. \cite{Jang16} have used a relaxation of the reparametrized objective, replacing the $\arg \max$ operation with a \emph{softmax} operation. The proposed \emph{Gumbel-Softmax} reformulation results in a smooth objective function, similar to the continuous latent variable case. Unfortunately, the softmax operation introduces bias to the gradient computation and becomes computationally intractable when using high-dimensional structured latent spaces, because the softmax normalization relies on a summation over all possible latent assignments.

This paper proposes optimizing the reparameterized discrete VAE objective directly, by using the \emph{direct loss minimization} approach \citep{McAllester10, Song16}, originally proposed for learning discriminative models. The cited work proves that a (biased) gradient estimator of the $\arg\max$ operation can be obtained from the difference between two maximization operations, over the original and over a perturbed objective, respectively. We apply the proposed estimator to the $\arg\max$ operation obtained from applying the Gumbel-Max trick. Compared to the Gumbel-Softmax estimator, our approach relies on maximization over the latent variable assignments, rather than summation, which is computationally more efficient. In particular, performing maximization exactly or approximately is possible in many structured cases, even when summation  remains intractable. We demonstrate empirically the effectiveness of the direct optimization technique to high-dimensional discrete VAEs, with unstructured and structured discrete latent variables.

Our technical contributions can be summarized as follows: (1) We apply the direct loss minimization approach to learning generative models; (2) We provide an alternative proof for the direct loss minimization approach, which does not rely on regularity assumptions; (3) We extend the proposed direct optimization-based estimator to discrete VAEs with structured latent spaces.

%!TEX root = neurips_2019.tex

\section{Related work}

% TODO: Go over this one more time.

Reparameterization is an effective method to reduce the variance of gradient estimates in learning latent variable models with continuous latent representations \cite{Kingma13, Rezende14, Ranganath14, Blei17, Mnih14, Gu15}. The success of these works led to reparameterization approaches in discrete latent spaces.  Rolfe et al. \cite{Rolfe16} and Vahdat et al. \cite{Vahdat18, Vahdat18b} represent the marginal distribution per binary latent variable with a continuous variable in the unit interval. This reparameterization approach allow propagating gradients through the continuous representation, but these works are restricted to binary random variables, and as a by-product, they require high-dimensional representations for which inference is exponential in the dimension size.

% In contrast, our work reparameterizes the discrete latent model using the Gumbel-Max trick and directly propagates gradients through the reparameterized objective. 

Most relevant to our work, Maddison et al. \cite{Maddison16} and Jang et al. \cite{Jang16} use the Gumbel-Max trick to reparameterize the discrete VAE objective, but, unlike our work, they relax the resulting formulation, replacing the $\arg\max$ with a softmax operation. In particular, they introduce the continuous Concrete (Gumbel-Softmax) distribution and replace the discrete random variables with continuous ones. Instead, our reparameterized objective remains non-differentiable and we use the direct optimization approach to propagate gradients through the $\arg \max$ using the difference of two maximization operations.

Recent work \cite{mena2018learning, Corro19} tackles the challenges associated with learning VAEs with structured discrete latent variables, but they can only handle specific structures. For instance, the Gumbel-Sinkhorn approach \cite{mena2018learning} extends the Gumbel-Softmax distribution to model permutations and matchings. The Perturb-and-Parse approach \citep{Corro19} focuses on latent dependency parses, and iteratively replaces any $\arg \max$ with a softmax operation in a spanning tree algorithm. In contrast, our framework is not restricted to a particular class of structures. Similar to our work, Johnson et al. \cite{johnson2016composing} use the VAE encoder network to compute local potentials to be used in a structured potential function. Unlike the cited work, which makes use of message passing in graphical models with conjugacy structure, we use the Gumbel-Max trick, which enables us to apply our method whenever the two maximization operations can be computed efficienty.

% Other methods model structures in variational Bayes using attention models \cite{Yin18}.

% This approach skews the distribution of the sampled structures and the iterative applications of the Gumbel-Softmax operations impose conditional independences on the edges of the sampled tree. 

% \tamir{what do we want to say here?} Perturb-and-map models have been introduced as a (tractable) alternative to the Gibbs distribution. Samples from a perturb-and-map model are defined by passing samples from a generic distribution through an argmax operation. In some cases, this sampling procedure results in induced distributions that exactly match the corresponding Gibbs distribution. Our work is connected to the perturb-and-map line of work as our approximate posterior distribution used as a result of applying the Gumbel-Max trick to the discrete VAE objective is a perturb-and-map model.

%!TEX root = neurips_2019.tex

\section{Background}
\label{sec:bg} 

To model the data generating distribution, we consider samples $S = \{x_1, . . . , x_m\}$ from a potentially high-dimensional set $x_i \in \mathcal{X}$, originating from an unknown underlying distribution. We estimate the parameters $\theta$ of a model $p_\theta(x)$ by minimizing its negative log-likelihood. We consider latent variable models of the form $p_\theta(x) = \sum_{z \in \mathcal{Z}} p_\theta(z) p_\theta(x | z)$, with high-dimensional discrete variables $z \in \mathcal{Z}$, whose log-likelihood computation requires marginalizing over the latent representation. Variational autoencoders utilize an auxiliary distribution $q_\phi(z | x)$ to upper bound the negative log-likelihood of the observed data points:
\[
\sum_{x \in S}  - \log p_\theta(x) \le \sum_{x \in S}   -\E_{z \sim q_\phi}  \log p_\theta(x | z) + \sum_{x \in S}  KL(q_\phi(z | x) || p_\theta(z)). \label{eq:vb}
\]
In discrete VAEs, the posterior distribution $q_\phi(z | x)$ and the data distribution conditioned on the latent representation $p_\theta(x|z)$ are modeled via the Gibbs distribution, namely $q_\phi(z | x) = e^{h_{\phi}(x, z)}$ and $p_\theta(x|z) = e^{f_{\theta}(x, z)}$. We use $h_{\phi}(x, z)$ and $f_{\theta}(x, z)$ to denote the (normalized) log-probabilities. Both quantities are modeled via differentiable (neural network based) mappings.

Parameter estimation of $\theta$ and $\phi$ is carried out by performing gradient descent on the right-hand side of Equation (\ref{eq:vb}). Unfortunately, computing the gradient of the first term $\E_{z \sim q_\phi} \log p_\theta(x | z)$ in a high-dimensional discrete latent space $z = (z_1,...,z_n)$ is challenging because the expectation enumerates over all possible latent assignments: 
\[
\nabla_\phi \E_{z \sim q_\phi}  \log p_\theta(x | z) = \sum_{z \in \mathcal{Z}} e^{h_\phi(x,z)} \nabla_\phi h_\phi(x,z) f_\theta(x, z)  \label{eq:grad}
\]
Alternatively, the score function estimator (REINFORCE) requires sampling from the high-dimensional structured latent space, which can be computationally challenging, and has high-variance, necessitating many samples.

\subsection{Gumbel-Max reparameterization}

The Gumbel-Max trick provides an alternative representation of the Gibbs distribution $q_\phi(z | x)$ that is based on the extreme value statistics of Gumbel-distributed random variables. Let $\gamma$ be a random function that associates an independent random variable $\gamma(z)$ for each input $z \in \mathcal{Z}$. When the random variables follow the zero mean Gumbel distribution law, whose probability density function is $g(\gamma) = \prod_{z \in \mathcal{Z}}e^{-(\gamma(z) + c + e^{-(\gamma(z) + c)})}$ for the Euler constant $c \approx 0.57$, we obtain the following identity\footnote{The set $\arg \max_{\hat z \in \mathcal{Z}} \{h_\phi(x, \hat z) + \gamma(\hat z)\}$ contains all maximizing assignments (possibly more than one). However, since the Gumbel distribution is continuous, the $\gamma$ for which the set of maximizing assignments contains multiple elements has measure zero. For notational convenience, when we consider integrals (or probability distributions), we ignore measure zero sets.} (cf. \cite{Kotz00}):
\[
e^{h_\phi(x,z)}  = \P_{\gamma \sim g}[\zopt = z] \label{eq:pm}, \mbox{ where } \zopt \triangleq \arg \max_{\hat z \in \mathcal{Z}} \{h_\phi(x, \hat z) + \gamma(\hat z)\} \label{eq:zgamma}
\]
Notably, samples from the Gibbs distribution can be obtained by drawing samples from the Gumbel distribution (which does not depend on learnable parameters) and applying a parameterized mapping, based on the $\arg\max$ operation. For completeness, a proof for the above equality appears in the supplementary material.

In the context of variational autoencoders, the Gumbel-Max formulation enables rewriting the expectation $\E_{z \sim q_\phi} \log p_\theta(x | z)$ with respect to the Gumbel distribution, similar to the application of the reparametrization trick in the continuous latent variable case \cite{kingma2013auto}. Unfortunately, the parameterized mapping is non-differentiable, as the $\arg \max$ function is piecewise constant. In response, the Gumbel-Softmax estimator \cite{Maddison16,Jang16} approximates the $\arg\max$ via the \emph{softmax} operation
\[
\P_{\gamma \sim g}[\zopt = z]  = \E_{\gamma \sim g}  [\1_\mathrm{\zopt = z}] &\approx&  \E_{\gamma \sim g}  \frac{e^{(h_\phi(x,z) + \gamma(z)) / \tau}}{\sum_{\hat z \in \mathcal{Z}} e^{(h_\phi(x,\hat z)  + \gamma(\hat z))/\tau}} \label{eq:gsm}
\]
for a temperature parameter $\tau$ (treated as a hyper-parameter), which produces a smooth objective function. Nevertheless, the approximated Gumbel-Softmax objective introduces bias, uses continuous rather than discrete variables (requiring discretization at test time), and its dependence on the softmax function can be computationally prohibitive when considering structured latent spaces $z = (z_1,...,z_n)$, as the normalization constant in Equation (\ref{eq:gsm}) sums over all the possible latent variable realizations $\hat z$.

\subsection{Direct loss minimization}
The direct loss minimization approach has been introduced for learning discriminative models \cite{McAllester10, Song16}. In the discriminative setting, the goal is to estimate a set of parameters\footnote{We match the notation of the parameters $\phi$ of the posterior distribution to highlight the connection between the two objectives.} $\phi$, used to predict a label for each (high-dimensional) input $x \in \mathcal{X}$ via $y^* = \arg \max_{y \in \mathcal{Y}} h_\phi(x,y)$, where $\mathcal{Y}$ is the set of continuous or discrete candidate labels. The score function $h_\phi(x,y)$ can be non-linear as a function of the parameters $\phi$, as developed by Song et al. \cite{Song16}.

Given training data tuples $(x,y)$ sampled from an unknown underlying data distribution $D$, the goodness of fit of the learned predictor is measured by a loss function $f(y, y^*)$, which is not necessarily differentiable. This is the case, for instance, when the labels $\mathcal{Y}$ are discrete, such as object labels in object recognition or action labels in action classification in videos \cite{Song16}. As a result, the expected loss $\E_{(x,y) \sim D} [f(y,y^*)]$ cannot always be optimized using standard methods such as gradient descent.

The typical solution is to replace the desired objective with a surrogate differentiable loss, such as the cross-entropy loss between the targets and the predicted distribution over labels. However, the direct loss minimization approach proposes to minimize the desired objective directly. The proposed gradient estimator uses a loss-perturbed predictor $y^*(\epsilon) = \arg \max_{\hat y} \{ h_\phi(x,\hat y) + \epsilon f(y, \hat y)\}$ and takes the following form:
\[
\nabla_\phi E_{{(x,y) \sim D}} [f(y, y^*)] =  \lim_{\epsilon \rightarrow 0} \frac{1}{\epsilon} \Big(E_{{(x,y) \sim D}} [\nabla_\phi h_\phi(x, y^*(\epsilon)) - \nabla_\phi h_\phi(x, y^*)]\Big) 
\]
In other words, the gradient estimator is obtained by performing pairs of maximization operations, one over the original objective (second term) and one over a perturbed objective (first term). The unbiased estimator is obtained when the perturbation parameter $\epsilon$ is approaching $0$. In practice, the parameter $\epsilon$ is assigned a small value, treated as a hyper-parameter, which introduces bias.

Unfortunately, the standard direct loss minimization approach predicts a single label $y^*$ for an input $x$ and, therefore, cannot generate a posterior distribution over samples $y$, i.e., it lacks a generative model. In our work we inject the Gumbel random variable to create a posterior over the label space enabling the application of this method to learning generative models. The Gumbel random variable allows us to overcome the general position assumption and the regularity conditions of \cite{McAllester10, Song16}.

%!TEX root = neurips_2019.tex

\section{Gumbel-Max reparameterization and direct optimization}
\label{sec:reparam}
\label{sec:direct}

We use the Gumbel-Max trick to rewrite the expected log-likelihood in the variational autoencoder objective $\E_{z \sim q_\phi}  \log p_\theta(x | z)$ in the following form:
\[
\E_{z \sim q_\phi}  \log p_\theta(x | z) = \sum_{z \in \mathcal{Z}} \P_{\gamma \sim g} [\zopt = z] f_\theta(x,z) = \E_{\gamma \sim g} [f_\theta(x,\zopt)]  \label{eq:reopt} 
\]
where $z^*$ is the maximizing assignment defined in Equation (\ref{eq:zgamma}). The equality results from the identity $\P_{\gamma \sim g} [\zopt = z]  = \E_{\gamma \sim g}  [\1_\mathrm{\zopt = z}]$, the linearity of expectation  $\sum_{z \in \mathcal{Z}} \E_{\gamma \sim g}  [\1_\mathrm{\zopt = z}] f_\theta(x,z) =  \E_{\gamma \sim g}  [\sum_{z \in \mathcal{Z}} \1_\mathrm{\zopt = z} f_\theta(x,\zopt)]$ and the fact that $\sum_{z \in \mathcal{Z}} \1_\mathrm{\zopt = z} = 1$. 

The gradient of $f_\theta(x,\zopt)$ with respect to the decoder parameters $\theta$ can be derived by the chain rule. The main challenge is evaluating the gradient of $\E_{\gamma \sim g} [f_\theta(x,\zopt)] $ with respect to the encoder parameters $\phi$, since $\zopt$ relies on an $\arg \max$ operation which is not differentiable. Our main result is presented in Theorem \ref{theorem:grad} and proposes a gradient estimator for the expectation $\E_{\gamma \sim g} [f_\theta(x,\zopt)]$ with respect to the encoder parameters $\phi$. In the following, we omit $\gamma \sim g$ to avoid notational overhead.
\begin{theorem}
\label{theorem:grad}
Assume that $h_\phi(x,z)$ is a smooth function of $\phi$. Let $\zopt \triangleq \arg \max_{\hat z \in \mathcal{Z}} \{h_\phi(x, \hat z) + \gamma(\hat z)\}$ and $\zdirect \triangleq \arg \max_{\hat z \in \mathcal{Z}} \{\epsilon f_\theta(x,\hat z) + h_\phi(x, \hat z) + \gamma(\hat z)\}$ be two random variables. Then  
\[
\nabla_\phi E_{\gamma} [f_\theta(x, \zopt)] =  \lim_{\epsilon \rightarrow 0} \frac{1}{\epsilon} \Big(E_{\gamma} [\nabla_\phi h_{\phi}(x, \zdirect) - \nabla_\phi h_{\phi}(x, \zopt)]\Big) \label{eq:direct}
\]
\end{theorem}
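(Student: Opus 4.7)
The plan is to introduce the auxiliary envelope
$$M(\epsilon,\phi) \;=\; \E_{\gamma}\!\left[\max_{\hat z \in \mathcal{Z}}\{\epsilon f_\theta(x,\hat z) + h_\phi(x,\hat z) + \gamma(\hat z)\}\right]$$
and show that the theorem is exactly the equality of the mixed partials $\partial_\epsilon \nabla_\phi M = \nabla_\phi \partial_\epsilon M$ evaluated at $\epsilon = 0$. All the real work is in showing that $M$ is sufficiently smooth in $(\epsilon,\phi)$ that both (a) the $\nabla_\phi$ and $\partial_\epsilon$ may be moved inside $\E_\gamma$ and (b) Schwarz's theorem on the interchange of mixed partials applies.

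The first step is an envelope (Danskin-type) argument in each variable separately. Because the Gumbel density is absolutely continuous on $\R^{|\mathcal{Z}|}$, the set of $\gamma$ for which the arg max defining $\zdirect$ (or $\zopt$) is not unique is the union of finitely many affine hyperplanes and therefore has Lebesgue measure zero, as footnoted in the excerpt. On the complement, the map $(\epsilon,\phi) \mapsto \max_{\hat z} \{\epsilon f_\theta(x,\hat z) + h_\phi(x,\hat z) + \gamma(\hat z)\}$ is differentiable with derivatives $f_\theta(x,\zdirect)$ and $\nabla_\phi h_\phi(x,\zdirect)$ respectively. Combining this with dominated convergence, using the smoothness of $h_\phi$ and the exponential tails of the Gumbel density to produce integrable envelopes for the difference quotients, yields
$$\partial_\epsilon M(\epsilon,\phi) = \E_{\gamma}[f_\theta(x,\zdirect)], \qquad \nabla_\phi M(\epsilon,\phi) = \E_{\gamma}[\nabla_\phi h_\phi(x,\zdirect)].$$

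The second step is to apply Schwarz's theorem to conclude $\partial_\epsilon \nabla_\phi M(\epsilon,\phi) = \nabla_\phi \partial_\epsilon M(\epsilon,\phi)$, i.e.
$$\partial_\epsilon \E_{\gamma}[\nabla_\phi h_\phi(x,\zdirect)] \;=\; \nabla_\phi \E_{\gamma}[f_\theta(x,\zdirect)].$$
Evaluating at $\epsilon = 0$, the right-hand side is $\nabla_\phi \E_\gamma[f_\theta(x,\zopt)]$ since $z^*(0) = \zopt$, while the left-hand side, written as a difference quotient, is
$$\lim_{\epsilon \to 0}\frac{1}{\epsilon}\,\E_{\gamma}\!\left[\nabla_\phi h_\phi(x,\zdirect) - \nabla_\phi h_\phi(x,\zopt)\right],$$
which is exactly the stated gradient estimator.

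The main obstacle is the rigorous justification of smoothness of $M$ in $(\epsilon,\phi)$, since the integrand is only piecewise smooth. This is precisely the point the paper advertises as the improvement over \cite{McAllester10, Song16}: the Gumbel smoothing replaces their ``general position'' and regularity hypotheses. Concretely, one must argue that integrating against the Gumbel density both smooths out the hyperplane singularities (because they are a null set and the density is bounded) and provides exponential tail control on $\max_{\hat z}\{h_\phi(x,\hat z)+\gamma(\hat z)\}$, so that $M$ is jointly $C^2$ with continuous mixed partials. Once this is established, the envelope interchange and Schwarz step close the proof with routine verifications.
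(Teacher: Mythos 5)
Your proposal follows exactly the paper's route: the same generating function ($M$ is the paper's $G$), the same identification of the two sides of the theorem as the two orders of the mixed partial at $\epsilon=0$, the same Danskin computation of $\partial_\epsilon M$ and $\nabla_\phi M$, and the same appeal to symmetry of second derivatives. The one substantive thing you leave open is the one step that actually carries the proof: you flag the joint smoothness of $M$ in $(\epsilon,\phi)$ as ``the main obstacle'' and sketch only a generic argument (ties form a null set, the density is bounded, exponential tails), which by itself does not deliver existence and continuity of the second mixed partial --- note that for fixed $\gamma$ the integrand $\nabla_\phi h_\phi(x,\zdirect)$ is piecewise constant in $\epsilon$, so one cannot simply differentiate under the integral a second time. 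The paper closes this with a change of variables $\hat\gamma(\hat z) = \epsilon f_\theta(x,\hat z) + h_\phi(x,\hat z) + \gamma(\hat z)$, which moves all dependence on $(\epsilon,\phi)$ into the Gumbel density, giving $M(\epsilon,\phi) = \int g(\hat\gamma - \epsilon f_\theta - h_\phi)\max_{\hat z}\hat\gamma(\hat z)\,d\hat\gamma$ with a manifestly smooth integrand; smoothness of $M$, the validity of differentiating under the integral, and Schwarz's theorem then all follow at once. Supplying that change of variables (or an equivalent device) is what you need to turn your outline into a complete proof.
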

\begin{proof}[Proof sketch:]
We use a \emph{prediction generating function} $G(\phi,\epsilon) = E_{\gamma} [\max_{\hat z \in \mathcal{Z}} \{\epsilon f_\theta(x,\hat z) + h_\phi(x,\hat z) + \gamma(\hat z) \}]$, whose derivatives are functions of the predictions $\zopt, \zdirect$. 
%\[
%$G(\phi, \epsilon) = E_{\gamma} [\max_{\hat z} \{\epsilon f_\theta(x,\hat z) + h_\phi(x,\hat z) + \gamma(\hat z) \}] \label{eq:G}$. 
%\]
The proof is composed of three steps: 
%\begin{enumerate} 
%\item 
(i) We prove that $G(\phi,\epsilon)$ is a smooth function of $\phi,\epsilon$. Therefore, the Hessian of $G(\phi,\epsilon)$ exists and it is symmetric, namely 
%\[
$\partial_\phi \partial_\epsilon G(\phi, \epsilon) = \partial_\epsilon \partial_\phi G(\phi, \epsilon). \label{eq:H}$
%\] 
%\item 
(ii) We show that the encoder gradient is apparent in the Hessian: 
%\[
$\partial_\phi \partial_\epsilon G(\phi, 0)  =  \nabla_\phi E_{\gamma} [f_\theta(x, \zopt)]. \label{eq:H1}$
%\]
%\item 
(iii) We rely on the smoothness $G(\phi, \epsilon)$ and derive our update rule as the complement representation of the Hessian: 
%\[
$\partial_\epsilon \partial_\phi G(\phi, 0) =  \lim_{\epsilon \rightarrow 0} \frac{1}{\epsilon} (E_{\gamma} [\nabla_\phi h_{\phi}(x, \zdirect) - \nabla_\phi h_{\phi}(x, \zopt)]). \label{eq:H2}$
%\]
%\end{enumerate}
The complete proof is included in the supplementary material.
\end{proof}

\begin{figure}[t]
    \centering
    \begin{tabular}{cccc}
    \includegraphics[height=3cm,width=3.2cm]{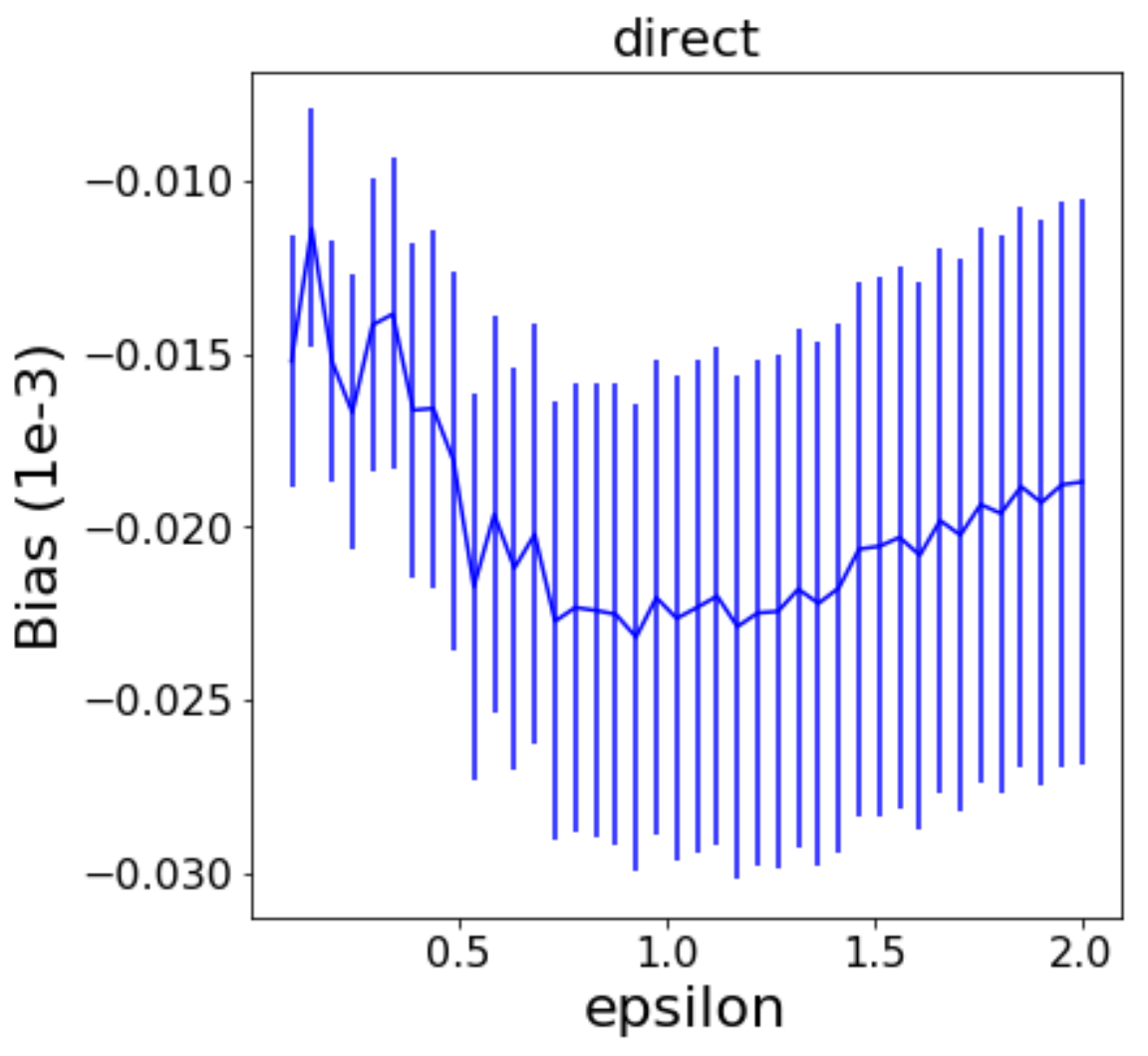} &
    \includegraphics[height=3cm,width=3.2cm]{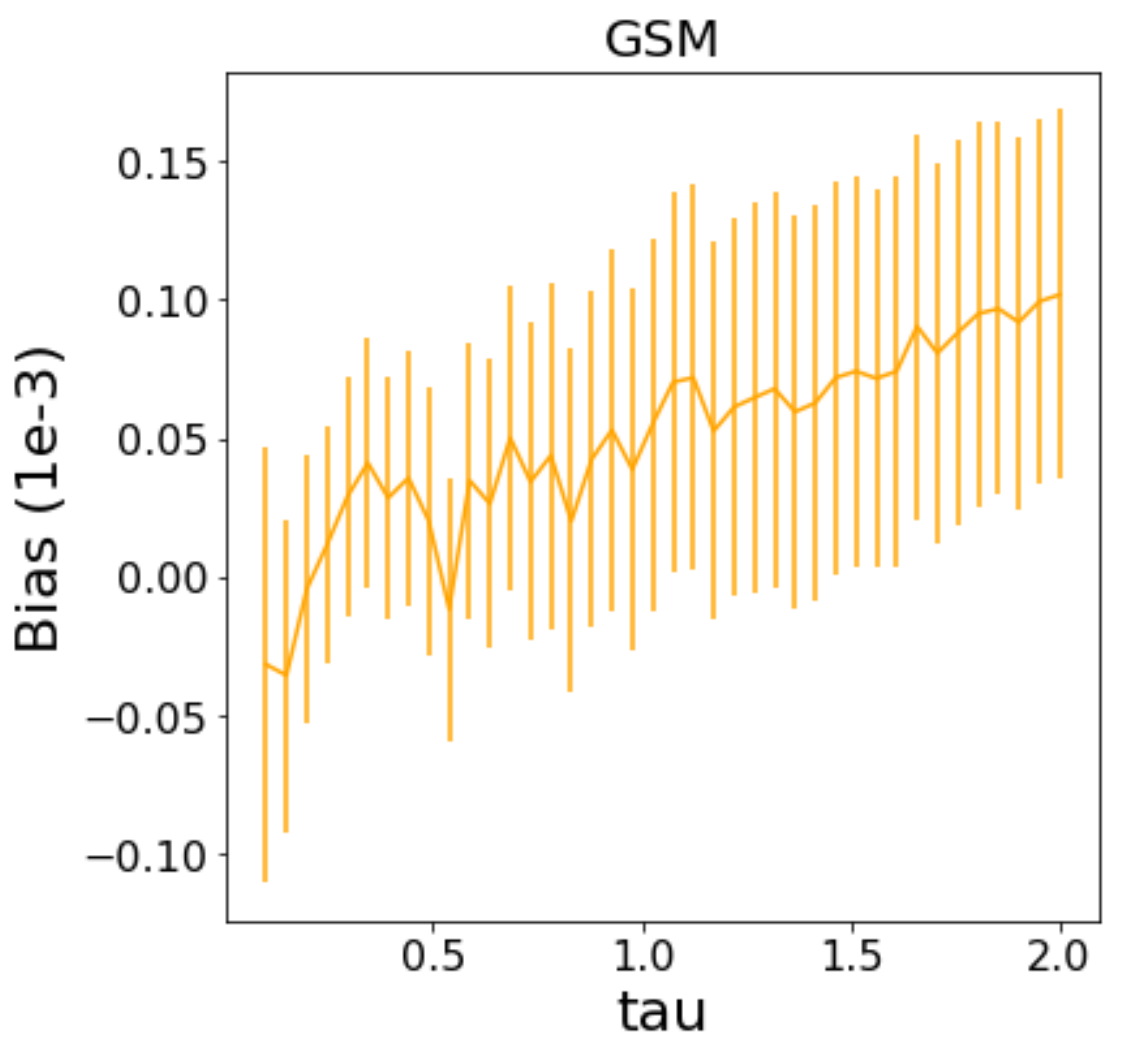} &
    \includegraphics[height=3cm,width=3.2cm]{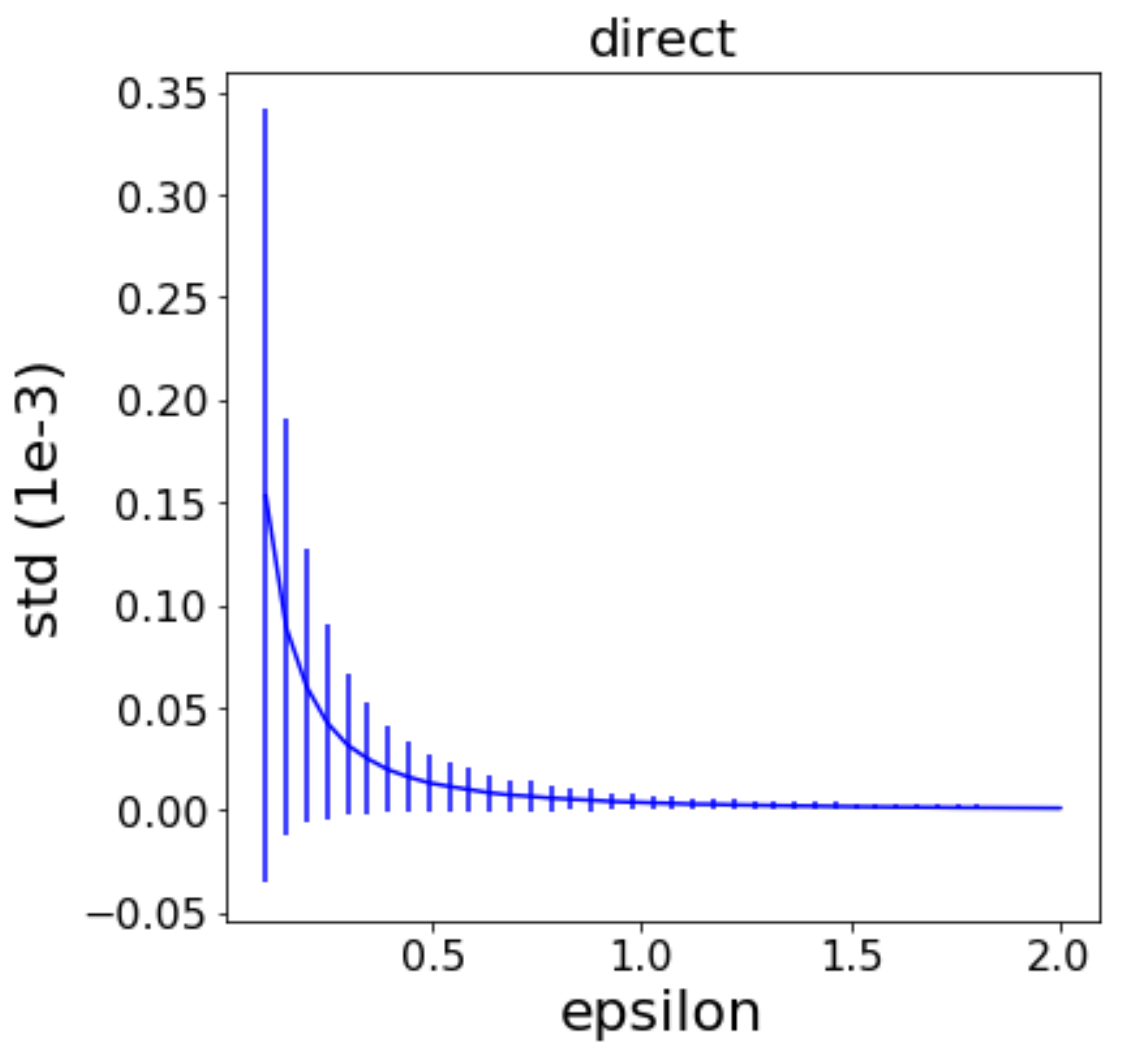} &
    \includegraphics[height=3cm,width=3.2cm]{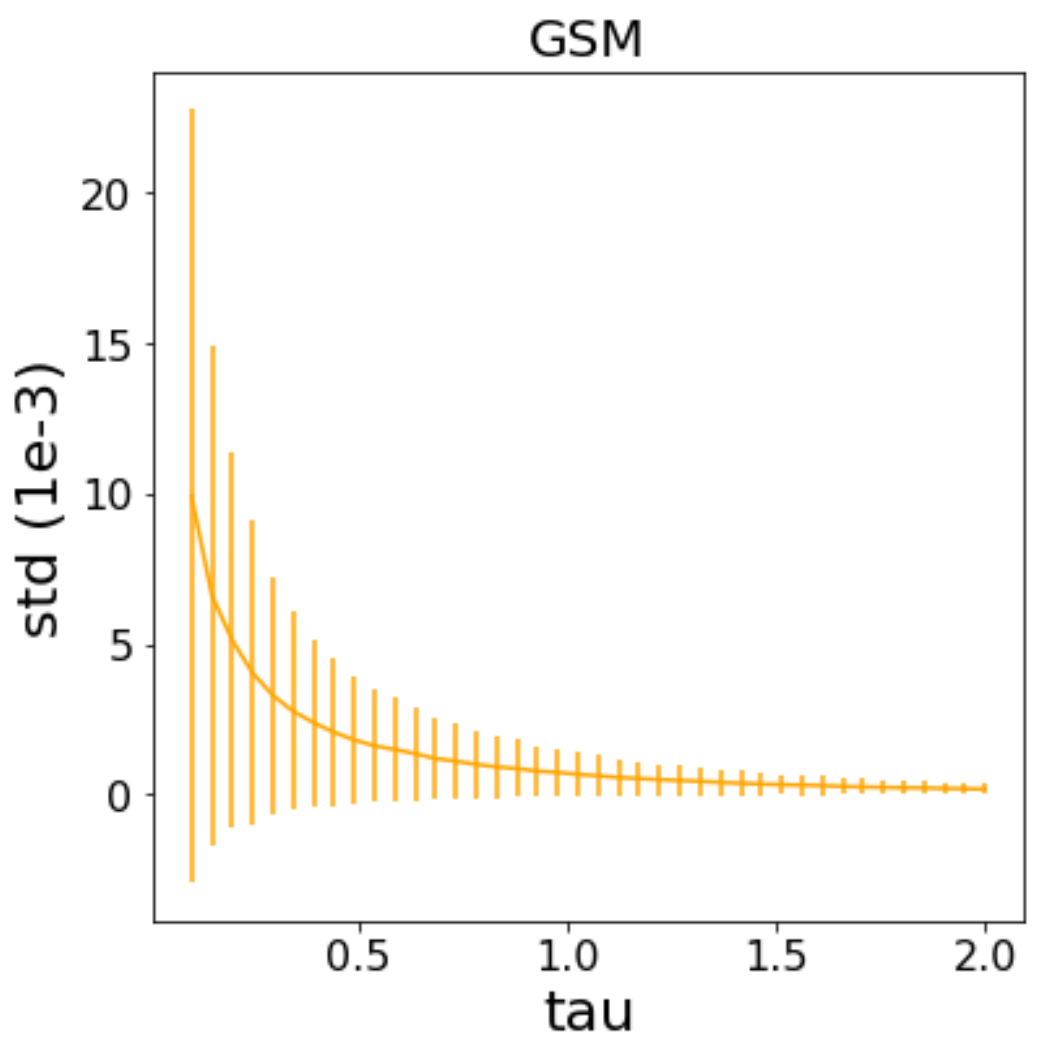} \\
    \vspace{-0.5cm} 
    \end{tabular}    
	\caption{
		Highlights the the bias-variance tradeoff of the direct optimization estimate as a function of $\epsilon$, compared to the Gumbel-Softmax gradient estimate as a function of its temperature $\tau$. In both cases, the architecture consists of an encoder $X \rightarrow FC(300) \rightarrow ReLU \rightarrow FC(K)$ and a matching  decoder. The parameters were learned using the unbiased gradient in Equation (\ref{eq:grad}) to ensure both the direct and GSM have the same (unbiased) reference point. From its optimal parameters we estimate the gradient randomly for $10,000$ times. Left: the bias from the analytic gradient. Right: the average standard deviation of the gradient estimate.
		}
	\label{fig:semi_bias_variance}
\end{figure}

The gradient estimator proposed in Theorem \ref{theorem:grad} requires two maximization operations. While computing $\zopt$ is straightforward, realizing $\zdirect$ requires evaluating $f_\theta(x,z)$ for each $z \in \mathcal{Z}$, i.e. evaluating the decoder network multiple times. Nevertheless, the resulting computational overhead can be reduced by performing these operations in parallel (we used batched operations in our implementation).

The gradient estimator is unbiased in the limit $\epsilon \rightarrow 0$. However, for small $\epsilon$ values the gradient is either zero, when $\zdirect = \zopt$, or very large, since the gradients' difference is multiplied by $1/\epsilon$. In practice we use $\epsilon \ge 0.1$ which means that the gradient estimator is biased. In Figure \ref{fig:semi_bias_variance} we compare the bias-variance tradeoff of the direct optimization estimator as a function of $\epsilon$, with the Gumbel-Softmax gradient estimator as a function of its temperature $\tau$. 
Figure \ref{fig:semi_bias_variance} shows that while $\epsilon$ and $\tau$ are the sources of bias in these two estimates, they have different impact in each framework.

\begin{wrapfigure}[14]{r}[0pt]{0pt}
\begin{minipage}[t]{.58\textwidth}
\vspace{-22pt}

\begin{algorithm}[H]
\footnotesize{
\caption{Direct Optimization for discrete VAEs}
\label{alg:avb}
\begin{algorithmic}[1]

    \STATE $\phi,\theta \gets$ Initialize parameters
   \WHILE{$\phi,\theta$ not converged}
     \STATE $ x \gets$ Random minibatch
 
     \STATE $\gamma \gets$ Random variables drawn from Gumbel distribution.
     \STATE $\zopt \gets \arg \max_{\hat z} \{h_\phi(x, \hat z) + \gamma(\hat z)\}$
     
     \STATE $\zdirect \gets \arg \max_{\hat z} \{\epsilon f_\theta(x,\hat z) + h_\phi(x, \hat z) + \gamma(\hat z)\}$     
     \STATE Compute $\theta$-gradient:\\
     \vspace{1ex}
     $g_\theta \gets \nabla_\theta f_\theta(x, \zopt)$
     \vspace{1ex}
     
     \STATE Compute $\phi$-gradient (eq. \ref{eq:direct}):\\
     \vspace{1ex}$
     g_\phi \gets
     \frac{1}{\epsilon} \Big(\nabla_\phi h_\phi(x, \zdirect) - \nabla_\phi h_\phi(x, \zopt)\Big)$\\
     \hspace{3cm}

%      \vspace{1ex}
        \STATE $\phi,\theta \gets$ Update parameters using gradients $g_\phi,g_\theta$ %respectively
   \ENDWHILE
\end{algorithmic}
}
\end{algorithm}
\end{minipage}
\end{wrapfigure}

Algorithm~\ref{alg:avb} highlights the proposed approach.
Each iteration begins with drawing a minibatch $x$ and computing the corresponding latent representations by mapping $x$ to $h_\phi(x, \hat z)$ and sampling from the resulting posterior distribution $q_\phi(z | x)$ (lines 3-5). The gradients w.r.t. $\theta$ are obtained via standard backpropagation (line 7). The gradients w.r.t. $\phi$ are obtained by reusing the computed $\zopt$ (line 5) and evaluating the loss-perturbed predictor (lines 6, 8).

Notably, the $\arg\max$ operations can be solved via non-differentiable solvers (e.g. branch and bound, max-flow).

%!TEX root = neurips_2019.tex

%\section{Extensions}
%\label{sec:ext}
\vspace{0.2cm}
\subsection{Structured latent spaces}
\label{sec:struct}
% \tamir{Perhaps this is Sec 4.1}
Discrete latent variables often carry semantic meaning. For example, in the CelebA dataset there are $n$ possible attributes for an images, e.g., Eyeglasses, Smiling, see Figure \ref{fig:celeb}. Assigning a binary random variable to each of the attributes, namely $z = (z_1,...,z_n)$, allows us to generate images with certain attributes turned on or off. In this example, the number of possible realizations of $z$ is $2^n$. 

Learning a discrete structured space may be computationally expensive. The Gumbel-Softmax estimator, as described in Equation (\ref{eq:gsm}), depends on the softmax normalization constant that requires to sum over exponential many terms (exponential in $n$). This computational complexity can be relaxed by ignoring structural relations within the encoder $h_\phi(x,z)$ and decompose it according to its dimensions, i.e., $h_\phi(x,z) =  \sum_{i=1}^n h_i(x,z_i;\phi)$. In this case the normalization constant requires only linearly many term (linear in $n$). However, the encoder does not account for correlations between the variables in the structured latent space. 

Gumbel-Max reparameterization can account for structural relations in the latent space $h_\phi(x,z)$ without suffering from the exponential cost of the softmax operation, since computing the $\arg \max$ is often more efficient than summing over all exponential possible options.

For computational efficiency we model only pairwise interactions in the  structured encoder:
\[
h_\phi(x,z) = \sum_{i=1}^n h_i(x,z_i;\phi) + \sum_{i,j=1}^n h_{i,j}(x,z_i,z_j;\phi)
\] 
The additional modeling power of $h_{i,j}(x,z_i,z_j;\phi)$ allows the encoder to better calibrate the dependences of the structured latent space that are fed into the decoder. In general, the pairwise correlations requires a quadratic integer program solvers, such as the CPLEX to recover the $\arg \max$. However, efficient maxflow solvers may be used when  the pairwise correlations have special structural restrictions, e.g., $h_{i,j}(x,z_i,z_j;\phi) = \alpha_{i,j}(x) z_i z_j$ for $\alpha_{i,j}(x) \ge 0$.   

The gradient realization in Theorem \ref{theorem:grad} holds also for the structured setting, whenever the structure of $\gamma$ follows the structure of $h_\phi$. 
%This representation is computationally appealing since computing the $\arg \max$ of a structured encoder, namely $z^{\phi + \gamma}$, is computationally more efficient than computing its softmax. For example, when $z$ consists of $n$ binary random variables, the softmax operation requires $k=2^n$ computations. 
This gradient realization requires to compute $\zopt$, $\zdirect$. While  $\zopt$ only depends on the structured encoder, the $\arg \max$-perturbation $\zdirect$ involves the structured decoder $f_\theta(x,z_1,...,z_n)$ that does not necessarily decompose according to the structured encoder. We use the fact that we can compute $\zopt$ efficiently and apply the low dimensional approximation $\tilde f_\theta(x,z) =  \sum_{i=1}^n \tilde f_i(x, z_i;\theta)$, where  $\tilde f_i(x, z_i;\theta) = f_\theta(x, \zopt_1, ..., z_i, ...,\zopt_n)$. With this in mind, we approximate $\zdirect$ with $\tilde z^*(\epsilon)$ that is computed by replacing $f_\theta(x,z)$ with  $\tilde f_\theta(x,z)$. In our implementation we use the batch operation to compute $\tilde f_\theta(x,z)$ efficiently.

\subsection{Semi-supervised learning}
\label{sec:semi}
%Semi-supervised VAEs are naturally integrated into our reparameterization framework using any loss function. Formally, assume that a subset of the data is labeled, i.e., $S_1 = \{(x_1,z_1),...,(x_{m_1},z_{m_1})\}$. 
% \tamir{And this is Sec 4.2}
Direct optimization naturally extends to semi-supervised learning, where we may add to the learning objective the loss function $\ell(z, \zopt)$, for supervised samples $(x,z)  \in S_1$, to better control the prediction of the latent space.  The semi-supervised discrete VAEs objective function is 
\[
\sum_{x \in S} \E_{\gamma} [f_\theta(x,\zopt)]  + \sum_{(x,z) \in S_1} \E_{\gamma} [\ell(z,\zopt)] + \sum_{x \in S}  KL(q_\phi(z | x) || p_\theta(z)) \label{eq:supervised}
\]
The supervised component is explicitly handled by Theorem \ref{theorem:grad}. Our supervised component is intimately related to direct loss minimization \citep{McAllester10, Song16}. %, which, in our setting, minimizes $\sum_{(x,z) \in S_1} \ell(z,z^*)$. 
%Compared to direct loss minimization, our work adds random perturbation $\gamma$ to the encoder and thus overcomes the ``general position" assumption of direct loss minimization. 
The added random perturbation $\gamma$ allows us to use a generative model to prediction, namely, we can randomly generate different explanations $\zopt$ while the direct loss minimization allows a single explanation for a given $x$.

%This also allows us to overcome the regularity assumption (cf. \citep{McAllester10}  Section 3.1). % using our ``prediction generating function" (cf. Equation (\ref{eq:G})).

%!TEX root = neurips_2019.tex

%\begin{figure}[t]
%\begin{tabular}{|p{1.35cm}|p{1.35cm}|p{1.3cm}|p{1.3cm}|p{1.3cm}|p{1.3cm}|p{1.3cm}|p{1.3cm}|}
%\hline
%\multicolumn{4}{|c|}{w/o glasses}  & \multicolumn{4}{|c|}{glasses} \\
%\hline
%\multicolumn{2}{|c|}{woman}  & \multicolumn{2}{|c|}{man}  & \multicolumn{2}{|c|}{woman}  & \multicolumn{2}{|c|}{man}  \\
%\hline
%w/o \;\; smile & smile & w/o smile & smile & w/o smile & smile & w/o smile & smile 
%\end{tabular}
%\includegraphics[width=1.0\linewidth]{figs/celeb.png} 
%	\caption{
% Learning attribute representation in CelebA, using our semi-supervised setting, by calibrating our $\arg \max$ prediction using a loss function. These images here are generated while setting their attributes to get the desired image. }
%	\label{fig:celeb}
%\end{figure}

\section{Experimental evaluation}
\label{sec:experiments}
%We compare our method to the state-of-the-art \citep{Maddison16, Jang16, Tucker17, Grath17, Yin19} in various setting, demonstrating the effectiveness of our approach in unstructured and structured latent spaces. We conclude with a set of experiments that demonstrate the effectiveness of our approach in semi-supervised learning of VAEs and the importance of weak supervision in image generation.  

%\begin{figure}[t]
%\centering
%\includegraphics[width=1\linewidth]{figs/elbo_20_10_comparison.png} 
%	\caption{
%Test log-loss bound of VAEs with a categorial variables $z \in \{1,...,40\}$. One can see that the gap between the test and train ELBO for Gumbel-Max perturbation models is smaller than the ELBO gap of Gumbel-Softmax perturbation model. The same behavior exists for different values of $k$.}
%	\label{fig:train_test}
%\end{figure}

\begin{table}[t]
\centering
\begin{tabular}{|c|c|c|c|c|c|c|c|c|c|}
\hline
& \multicolumn{3}{c|}{MNIST}  & \multicolumn{3}{c|}{Fashion MNIST}  & \multicolumn{3}{c|}{Omniglot}   \\
\cline{2-10}
$k$ & unbiased & direct & GSM &             unbiased & direct & GSM                   & unbiased & direct & GSM  \\
\hline
$10$ & 164.53 & 165.26 & 167.88 &           228.46 & 222.86 & 238.37 &            155.44 & 155.94 & 160.13  \\ 
$20$ & 152.31 &  153.08 & 156.41 &           206.40 & 198.39 & 211.87 &            152.05 & 152.13 & 166.76  \\
$30$ & 149.17 & 147.38 & 152.15  &           205.60 & 189.44 & 197.01 &            152.10 & 150.14  & 157.33  \\
$40$ & 142.86 & 143.95 & 147.56  &           205.68 & 184.21 & 195.22 &            151.38 & 150.33 & 156.09  \\
$50$ & 155.37 & 140.38 & 146.12  &           200.88 & 180.31 & 191.00 &            156.84 & 149.12  & 164.01  \\
\hline
\end{tabular}
\vspace{1mm}
	\caption{
Compares the test loss of VAEs with different categorial variables $z \in \{1,...,k\}$. Direct optimization achieves similar test loss to the unbiased method (Equation (\ref{eq:grad})) and achieves a better test loss than GSM, in spite the fact both direct optimization and GSM use biased gradient descent.}
	\label{table:dvae}
\end{table}

%\subsection{Discrete VAEs}
We begin our experiments by comparing the test loss of direct optimization, the Gumbel-Softmax (GSM) and the unbiased gradient computation in Equation (\ref{eq:grad}). We performed these experiments using the binarized MNIST dataset \citep{Salakhutdinov08}, Fashion-MNIST \citep{Xiao17} and Omniglot \citep{Lake15}. The architecture consists of an encoder $X \rightarrow FC(300) \rightarrow ReLU \rightarrow FC(K)$, a matching  decoder $K \rightarrow FC(300) \rightarrow ReLU \rightarrow FC(X)$ and a BCE loss. Following \cite{Jang16} we set our learning rate to $1e-3$ and the annealing rate to $1e-5$ and we used their annealing schedule every $1000$ steps, setting the minimal $\epsilon$ to be $0.1$. The results appear in Table \ref{table:dvae}. When considering MNIST and Omniglot, direct optimization achieves similar test loss to the unbiased method, which uses the analytical gradient computation in Equation (\ref{eq:grad}). Also, direct optimization achieves a better result than GSM, in spite the fact both direct optimization and GSM use biased gradient descent: direct optimization uses a biased gradient for the exact objective in Equation (\ref{eq:vb}), while GSM uses an exact gradient for an approximated objective. Surprisingly, on Fashion-MNIST, direct optimization achieves better test loss than the unbiased. To further explore this phenomenon, in Figure \ref{fig:dvae} one can see that the unbiased method takes more epochs to converge, and eventually it achieves similar and often better test loss than direct optimization on MNIST and Omniglot. In contrast, on Fashion-MNIST, direct optimization is better than the unbiased gradient method, which we attribute to the slower convergence of the unbiased method, see supplementary material for more evidence. %We conjecture that in cases the direct optimization is better than the unbiased gradient (e.g., for $k=30$ on MNIST) it is in part due to the non-convexity of the bound. 

It is important to compare the wall-clock time of each approach. The unbiased method requires $k$ computations of the encoder and the decoder in a forward and backward pass. GSM requires a single forward pass and a single backward pass (encapsulating the $k$ computations of the softmax normalization within the code). In contrast, our approach requires a single forward pass, but $k$ computations of the decoder $f_\theta(x,z)$ for $z=1,...,k$ in the backward pass. In our implementation we use the batch operation to compute $f_\theta(x,z)$ efficiently. Figure \ref{fig:dvae} compares the test loss as a function of the wall clock time and shows that while our method is 1.5 times slower than GSM, its test loss is lower than the GSM at any time. 

\begin{figure}[t]
\centering
\begin{tabular}{ccc}
\includegraphics[width=0.33\linewidth]{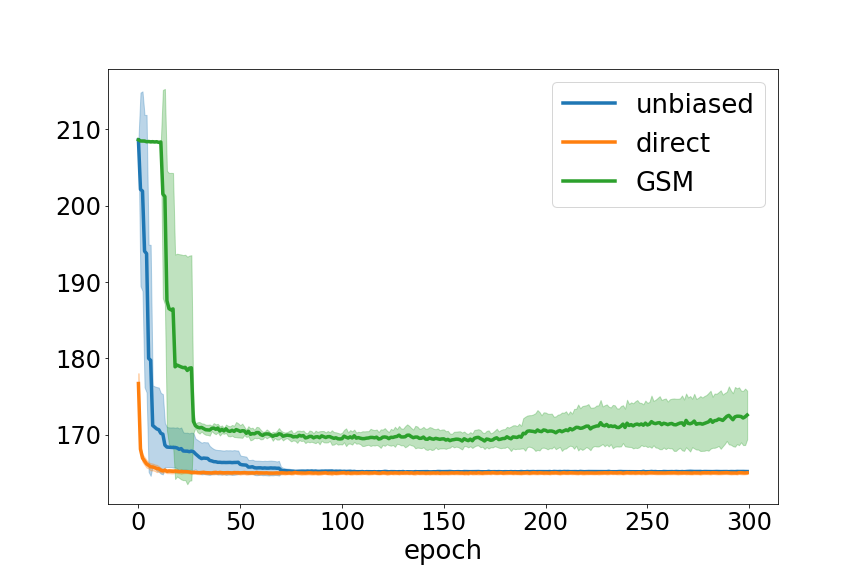} &
\includegraphics[width=0.33\linewidth]{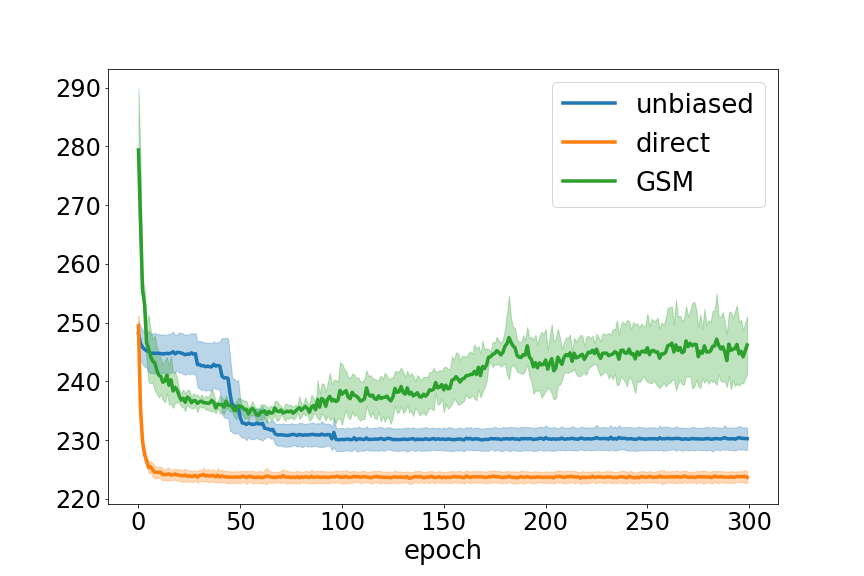} &
\includegraphics[width=0.33\linewidth]{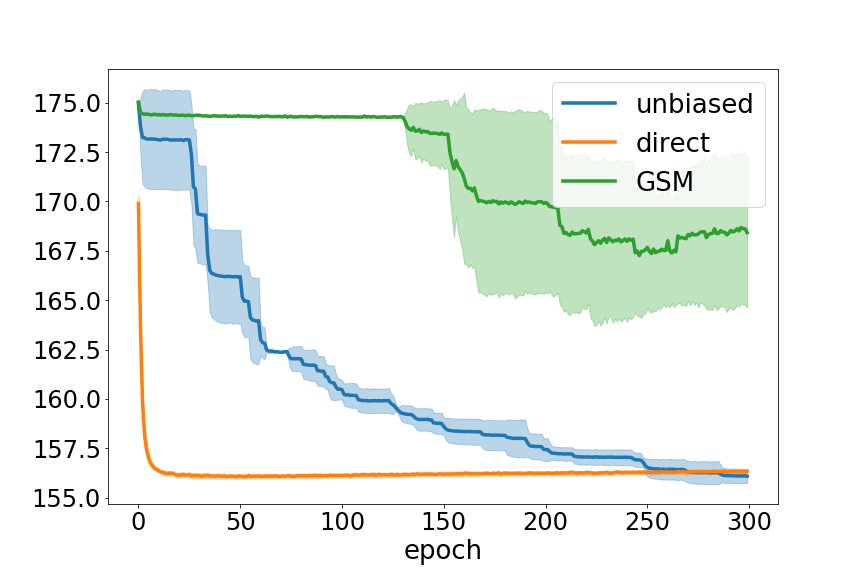} \\
\includegraphics[width=0.33\linewidth]{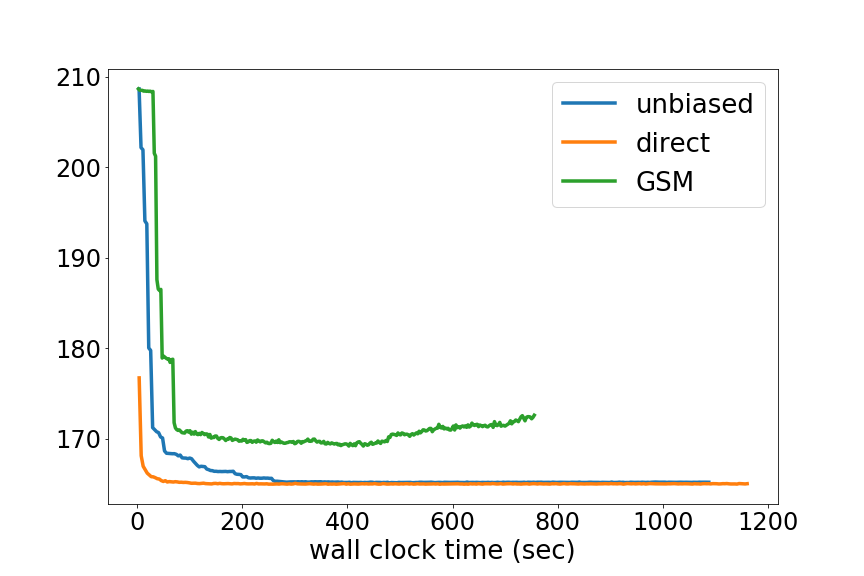} &
\includegraphics[width=0.33\linewidth]{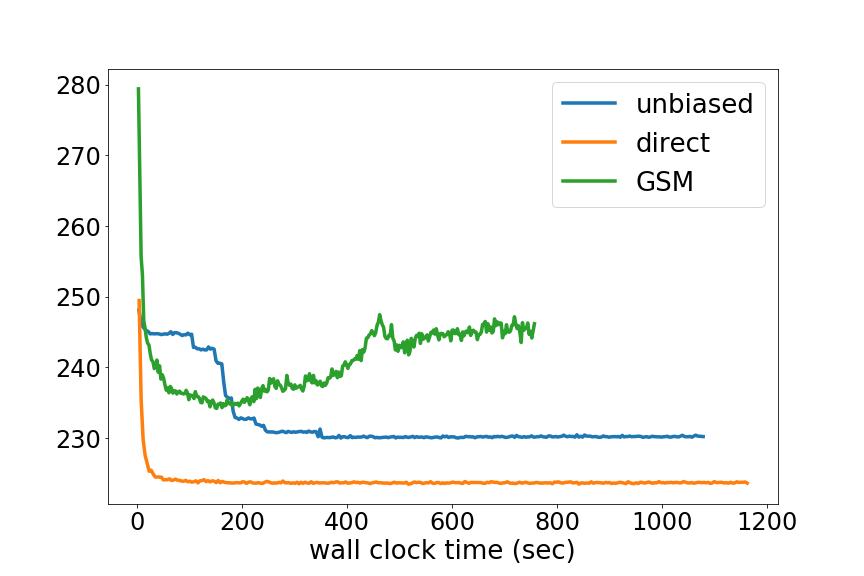} &
\includegraphics[width=0.33\linewidth]{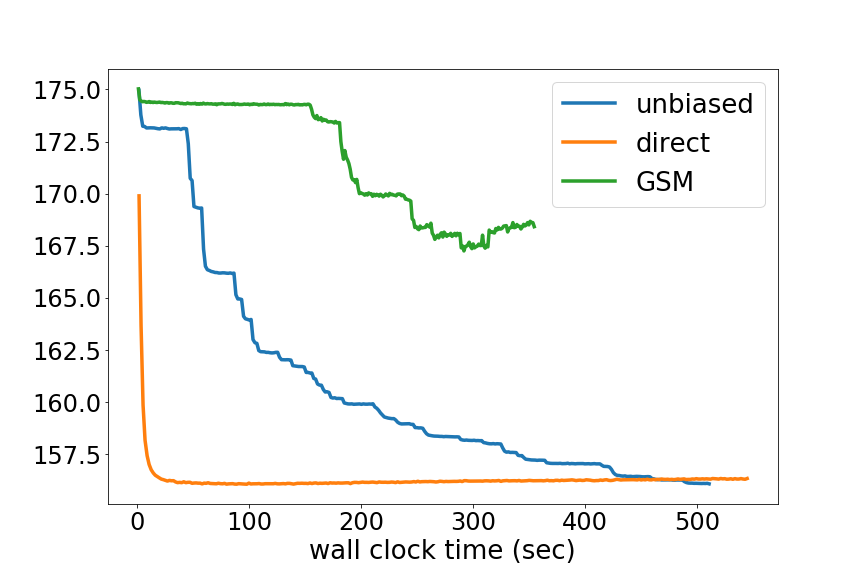} \\
\small MNIST & Fashion-MNIST & Omniglot
\end{tabular}
	\caption{
	Comparing the decrease of the test loss for $k=10$. Top row: test loss as a function of the learning epoch. Bottom row: test loss as a function of the learning wall-clock time. Incomplete plot in the bottom row suggests the algorithm required less time to finish $300$ epochs.  
}
	\label{fig:dvae}
\end{figure}

\begin{figure}[t]
\centering
\begin{tabular}{ccc}
\includegraphics[width=0.33\linewidth]  {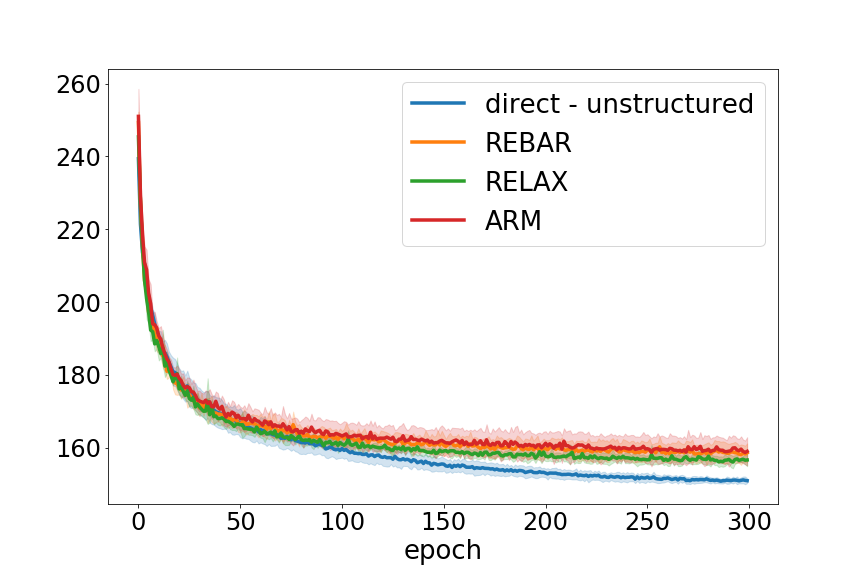} &
\includegraphics[width=0.33\linewidth]{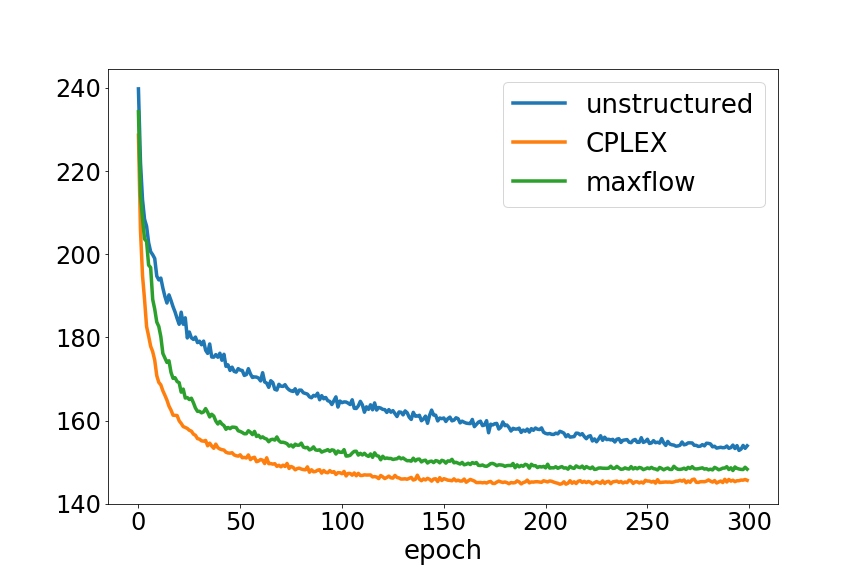} &
\includegraphics[width=0.33\linewidth]{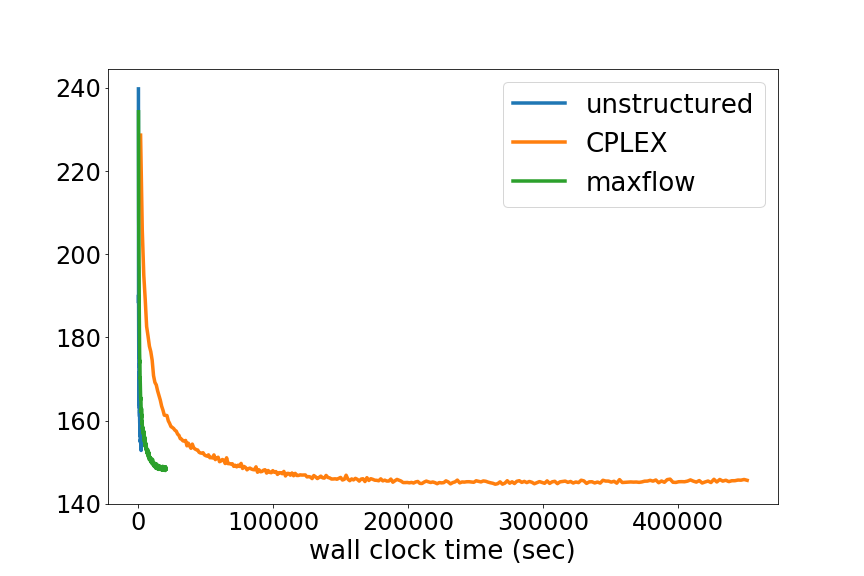} \\
%(a) & (b) & (c)
\end{tabular}
	\caption{
	Left: test loss of unstructured encoder and a structured decoder as a function of their epochs. Middle: using structured decoders and comparing unstructured encoders to structured encoders, $h_{i,j}(x, z_i, z_j; \phi) = \alpha_{i,j}(x) z_i z_j$, both for general $\alpha_{i,j}(x)$ (recovering the $\arg \max$ using CPLEX) and for $\alpha_{i,j}(x) \ge 0$ (recovering the $\arg \max$ using maxflow). Right: comparing the wall-clock time of decomposable and structured encoders.    
}
	\label{fig:struct}
\end{figure}

%\subsection{Discrete structured VAEs}
Next we perform a set of experiments  on Fashion-MNIST using discrete structured latent spaces $z = (z_1,..., z_n)$ while each $z_i$ is binary, i.e., $z_i \in \{0,1\}$. In the following experiments we consider a structured decoder $f_\theta(x,z) = f_\theta(x,z_1,...,z_n)$. The decoder architecture consists of the modules $(2 \times 15) \rightarrow FC(300) \rightarrow ReLU \rightarrow FC(X)$ and a BCE loss. For $n=15$ the computational cost of the softmax in GSM is high (exponential in $n$) and therefore one cannot use a structured encoder with GSM. %, see supplementary material for the exponential growth of GSMs with structured encoders. 

Our first experiment with a structured decoder considers an unstructured encoder  $h_\phi(x,z) = \sum_{i=1}^n h_i(x, z_i;\phi)$ for GSM and direct optimization. This experiment demonstrates  the effectiveness of our low dimensional approximation $\tilde f_\theta(x,z) =  \sum_{i=1}^n \tilde f_i(x, z_i;\theta)$, where  $\tilde f_i(x, z_i;\theta) = f_\theta(x, \zopt_1, ..., z_i, ...,\zopt_n)$ for applying direct optimization to structured decoders in Section \ref{sec:struct}. We also compare the unbiased estimators REBAR \citep{Tucker17} and RELAX \citep{Grath17} and the recent ARM estimator \citep{Yin19}.\footnote{For REBAR and RELAX we used the code in \url{https://github.com/duvenaud/relax}. and for ARM we used the code in \url{https://github.com/mingzhang-yin/ARM-gradient}} The results appear in Figure \ref{fig:struct} and may suggest that using the approximated $\tilde z^*(\epsilon)$, the gradient estimate of direct optimization still points towards a direction of descent for the exact objective. %In contrast, GSM's exact gradient points towards a direction of descent for an approximated objective. 

Our second experiment uses a structured decoder with structured encoders, which may account for correlations between latent random variables $h_\phi(x,z) = \sum_{i=1}^n h_i(x, z_i;\phi) + \sum_{i,j=1}^n h_{i,j}(x, z_i, z_j;\phi)$. 
%The computational complexity of learning structured VAE with Gumbel-Softmax perturbation model is exponential in $n$, i.e., $k=2^n$, since it requires to compute the softmax over $\phi(x,z)$. In contrast, learning structured VAE with Gumbel-Max perturbation model depends on the computational complexity of finding the maximal argument of $\phi(x,z)$, which is often much faster. 
In this experiment we compare two structured encoders with pairwise functions $h_{i,j}(x, z_i, z_j;\phi) = \alpha_{i,j}(x) z_i z_j$. We use a general pairwise structured encoder where the $\arg \max$ is recovered using the CPLEX algorithm \cite{cplex}. We also apply a super-modular encoder, where $\alpha_{i,j}(x) \ge 0$ is enforced using the softplus transfer function, and the $\arg \max$ is recovered using the maxflow algorithm \cite{Boykov01}. In Figure \ref{fig:struct} we compare the general and super-modular structured encoders with an unstructured encoder ($\alpha_{i,j}(x) = 0$), all are learned using direct optimization. One can see that structured encoders achieve better bounds, while the wall-clock time of learning super-modular structured encoder using maxflow ($\alpha_{i,j}(x) \ge 0$) is comparable to learning unstructured encoders. One can also see that the general structured encoder, with any $\alpha_{i,j}(x)$, achieves better test loss than the super-modular structured encoder. However, this comes with a computational price, as the maxflow algorithm is orders of magnitude faster than CPLEX, and structured encoder with CPLEX becomes better than maxflow only in epoch 85, see Figure \ref{fig:struct}.

%\subsection{Semi supervised}
Finally, we perform a set of semi-supervised experiments, for which we use a mixed continuous discrete architecture, \citep{Kingma14, Jang16}. The architecture of the base encoder is $(28 \times 28) \rightarrow FC(400) \rightarrow ReLU \rightarrow FC(200)$. The output of this layer is fed both to a discrete encoder $h_d$ and a continuous encoder $h_c$. The discrete latent space is $z_d \in \{1,...,10\}$ and its encoder $h_d$ is $200 \rightarrow FC(100) \rightarrow ReLU \rightarrow FC(10)$. The continuous latent space considers $k = 10, c = 20$, and its encoder $h_c$ consists of a $200 \rightarrow FC(100) \rightarrow ReLU \rightarrow FC(66) \rightarrow FC(40)$ to estimate the mean and variance of $20-$dimensional Gaussian random variables $z_1,...,z_{10}$. The mixed discrete-continuous latent space consists of the matrix $diag(z_d^*) \cdot z_c$, i.e, if $z^*_d = i$ then this matrix is all zero, except for the $i$-th row. The parameters of $z_c$ are shared across the rows $z=1,...,k$ through the batch operation.

\begin{figure}[t]
\centering
\begin{tabular}{cc}
\includegraphics[width=4cm,height=3cm]{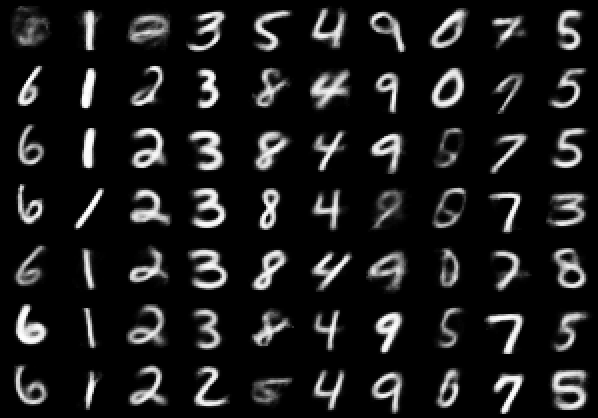} &
\includegraphics[width=4cm,height=3cm]{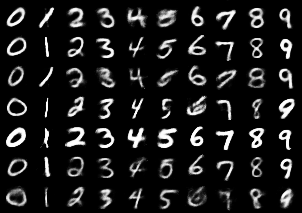} \\
unsupervised & semisupervised \\
\end{tabular}
\vspace{1mm}
	\caption{
Comparing unsupervised to semi-supervised VAE on MNIST, for which the discrete latent variable has $10$ values, i.e., $z \in \{1,...,10\}$. Weak supervision helps the VAE to capture the class information and consequently improve the image generation process.  %Left: generated images from an unsupervised VAE. Right: generated images from semi-supervised mixture model. The $j-$th column consists of images generated for the $j-$th discrete class. The $i-$th row consists the generation of the same continuous latent variable for all the $10$ classes. One can see that the discrete latent space is able to learn the class representation. 
%The continuous latent space learns the variability within the class. 
%Comparing to unsupervised image generation, we observe that some supervision improves the image generation per class.
}
	\label{fig:semi_results}
\end{figure}

\begin{table}[t]
\centering
\begin{tabular}{|c|c|c|c|c|c|c|c|c|}
\hline 
 & \multicolumn{4}{c|}{MNIST} & \multicolumn{4}{c|}{Fashion-MNIST} \\
 & \multicolumn{2}{c|}{accuracy} & \multicolumn{2}{c|}{bound}  & \multicolumn{2}{c|}{accuracy} & \multicolumn{2}{c|}{bound} \\
\#labels & direct & GSM & direct & GSM & direct & GSM & direct & GSM \\
\hline 
$50$ & $92.6\%$ & $84.7\%$ & $90.24$ & $91.23$ & $63.3\%$ & $61.2\%$ & $129.66$ & $129.813$ \\
$100$ & $95.4\%$ & $88.4\%$ & $90.93$ & $90.64$ & $67.2\%$ & $64.2\%$ & $130.822$ & $129.054$ \\
$300$ & $96.4\%$ & $91.7\%$ & $90.39$ & $90.01$ & $70.0\%$ & $69.3\%$ & $130.653$ & $130.371$ \\
$600$ & $96.7\%$ & $92.3\%$ & $90.78$ & $89.77$ & $72.1\%$ & $71.6\%$ & $130.81$ & $129.973$ \\
$1200$ & $96.8\%$ & $92.7\%$ & $90.45$ & $90.37$ & $73.7\%$ & $73.2\%$ & $130.921$ & $130.063$ \\
\hline
\end{tabular} 
\vspace{2mm}
	\caption{
	Semi-supervised VAE on MNIST and Fashion-MNIST with $50/100/300/600/1200$ labeled examples out of the $50,000$ training examples. %Direct loss minimization combined with VAE improves the performance even with weak supervision, e.g., with only $50$ examples direct loss minimization with VAE achieves better accuracy than semi-supervised Gumbel-Softmax VAE. 
}
	\label{table:semi_quant}
\end{table}
\begin{figure}[t]
\begin{tabular}{|p{1.33cm}|p{1.3cm}|p{1.33cm}|p{1.3cm}|p{1.33cm}|p{1.29cm}|p{1.33cm}|p{1.29cm}|}
\hline
\multicolumn{4}{|c|}{w/o glasses}  & \multicolumn{4}{c|}{glasses} \\
\hline
\multicolumn{2}{|c|}{woman}  & \multicolumn{2}{c|}{man}  & \multicolumn{2}{c|}{woman}  & \multicolumn{2}{c|}{man}  \\
\hline
w/o smile & smile & w/o smile & smile & w/o smile & smile & w/o smile & smile
\end{tabular}
\includegraphics[width=1\linewidth]{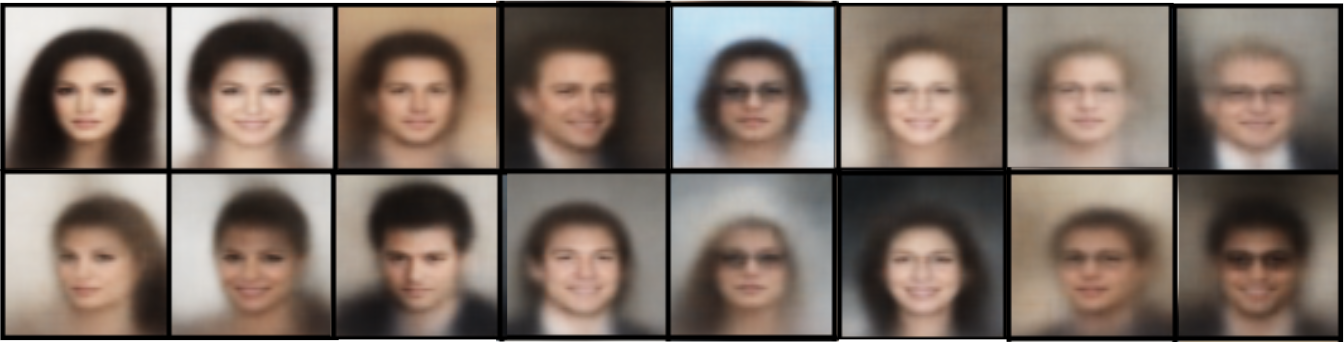}
\vspace{-5mm}
	\caption{Learning attribute representation in CelebA, using our semi-supervised setting, by calibrating our $\arg \max$ prediction using a loss function. These images here are generated while setting their attributes to get the desired image. The $i-$th row consists the generation of the same continuous latent variable for all the attributes}
	\label{fig:celeb}
\end{figure}

We conducted a quantitive experiment with weak supervision on MNIST and Fashion-MNIST with $50/100/300/600/1200$ labeled examples out of the $50,000$ training examples. For labeled examples, we set the perturbed label $z^*(\epsilon)$ to be the true label. This is equivalent to using the indicator loss function over the space of correct predictions. A comparison of direct optimization with GSM appears in Table \ref{table:semi_quant}. 
%We note that we cannot compare the objective function of both methods, as our objective considers direct loss minimization, however, we are able to compare the test log-loss bound, see Table \ref{table:semi_quant}. 
Figure \ref{fig:semi_results} shows the importance of weak supervision in semantic latent space, as it allows the VAE to better capture the class information. 

Supervision in generative models also helps to control discrete semantics within images. We learn to generate images using $k=8$ discrete attributes of the CelebA dataset (cf. \cite{Liu15}) while using our semi-supervised VAE. For this task, we use convolutional layers for both the encoder and the decoder, except the last two layers of the continuous latent model which  are linear layers that share parameters over the $8$ possible representations of the image. In Figure \ref{fig:celeb}, we show generated images with discrete semantics turned on/off (with/without glasses, with/without smile, woman/man).
%\begin{figure}[t]
%\centering
%\includegraphics[width=0.5\linewidth]{figs/mixture_model.png} 
%	\caption{
%	The continuous-discrete latent space is the matrix $diag(z^{\phi_d + \gamma}) \cdot z_c$, where $z^{\phi_d + \gamma}$ is a $k-$length one-hot vector and $z_c$ is a $kc$ matrix of independent Gaussian random variables. If $z^{\phi_d + \gamma} = i$, then the matrix multiplication  $diag(z^{\phi_d + \gamma}) \cdot z_c$ is all zero, except for the $i-$th row, which consists of $c$ independent Gaussian random variables. With this we are able to encode both the discrete information (e.g., the image class) and the continuous information (e.g, the image style).  
%}
%	\label{fig:mixed}
%\end{figure}

%\begin{figure}[t]
%\centering
%\includegraphics[width=1\linewidth]{figs/time_structured.png} 
%	\caption{
%The runtime of a single epoch of discrete structured VAE with $n$ binary random variables, comparing Gumbel-Softmax and Gumbel-Max perturbation models. The computational complexity of the softmax operation is exponential in $n$.}
%	\label{fig:runtime}
%\end{figure}
%  

%!TEX root = neurips_2019.tex

\section{Discussion and future work}
In this work, we use the Gumbel-Max trick to reparameterize discrete VAEs using the $\arg \max$ prediction and show how to propagate gradients through the non-differentiable $\arg \max$ function. We show that this approach compares favorably to state-of-the-art methods, and extend it to structured encoders and semi-supervised learning.

% Andreea's TODO: Find that paper that uses GAN-like training for the decoder but with an inference model as well. I wonder if there are GANs decoding from discrete latent spaces.
These results can be taken in a number of different directions. Our gradient estimation is practically biased, while REINFORCE is an unbiased estimator. As a result, our methods may benefit from the REBAR/RELAX framework, which directs biased gradients towards the unbiased gradient \citep{Tucker17,Roeder17}. 
%There are also open problems when fitting this approach to structured latent spaces \citep{mena2018learning, Jin18}, as well as estimating its KL-divergence \citep{Roeder17}. 
There are also optimization-related questions that arise from our work, such as exploring the interplay between the $\epsilon$ parameter and the learning rate. % {\color{blue} \bf What does this mean? --AG} The number of stochastic gradient steps, interleaving Gumbel perturbation with batch samples, might also benefit from a rigorous investigation.

%The direct optimization approach we present is general and may be applied beyond VAEs, including reinforcement learning and attention models. Further investigation in this direction is required.  

%\input{appendix}

\bibliography{neurips_2019}

\begin{thebibliography}{10}

\bibitem{Blei17}
David~M Blei, Alp Kucukelbir, and Jon~D McAuliffe.
\newblock Variational inference: A review for statisticians.
\newblock {\em Journal of the American Statistical Association},
  112(518):859--877, 2017.

\bibitem{blei2003latent}
David~M Blei, Andrew~Y Ng, and Michael~I Jordan.
\newblock Latent dirichlet allocation.
\newblock {\em Journal of machine Learning research}, 3(Jan):993--1022, 2003.

\bibitem{Boykov01}
Y.~Boykov, O.~Veksler, and R.~Zabih.
\newblock Fast approximate energy minimization via graph cuts.
\newblock {\em PAMI}, 2001.

\bibitem{Corro19}
Caio Corro and Ivan Titov.
\newblock Differentiable perturb-and-parse: Semi-supervised parsing with a
  structured variational autoencoder.
\newblock In {\em International Conference on Learning Representations}, 2019.

\bibitem{cplex}
IBM~ILOG Cplex.
\newblock V12. 1: User?s manual for cplex.
\newblock {\em International Business Machines Corporation}, 46(53):157, 2009.

\bibitem{eslami2016attend}
SM~Ali Eslami, Nicolas Heess, Theophane Weber, Yuval Tassa, David Szepesvari,
  Geoffrey~E Hinton, et~al.
\newblock Attend, infer, repeat: Fast scene understanding with generative
  models.
\newblock In {\em Advances in Neural Information Processing Systems}, pages
  3225--3233, 2016.

\bibitem{Grath17}
Will Grathwohl, Dami Choi, Yuhuai Wu, Geoff Roeder, and David Duvenaud.
\newblock Backpropagation through the void: Optimizing control variates for
  black-box gradient estimation.
\newblock In {\em International Conference on Learning Representations}, 2018.

\bibitem{Gu15}
Shixiang Gu, Sergey Levine, Ilya Sutskever, and Andriy Mnih.
\newblock Muprop: Unbiased backpropagation for stochastic neural networks.
\newblock {\em arXiv preprint arXiv:1511.05176}, 2015.

\bibitem{hu2017toward}
Zhiting Hu, Zichao Yang, Xiaodan Liang, Ruslan Salakhutdinov, and Eric~P Xing.
\newblock Toward controlled generation of text.
\newblock In {\em International Conference on Machine Learning}, pages
  1587--1596, 2017.

\bibitem{Jang16}
Eric Jang, Shixiang Gu, and Ben Poole.
\newblock Categorical reparameterization with gumbel-softmax.
\newblock {\em arXiv preprint arXiv:1611.01144}, 2016.

\bibitem{johnson2016composing}
Matthew~J Johnson, David~K Duvenaud, Alex Wiltschko, Ryan~P Adams, and
  Sandeep~R Datta.
\newblock Composing graphical models with neural networks for structured
  representations and fast inference.
\newblock In {\em Advances in neural information processing systems}, pages
  2946--2954, 2016.

\bibitem{Kingma14}
Diederik~P Kingma, Shakir Mohamed, Danilo~Jimenez Rezende, and Max Welling.
\newblock Semi-supervised learning with deep generative models.
\newblock In {\em Advances in Neural Information Processing Systems}, pages
  3581--3589, 2014.

\bibitem{kingma2013auto}
Diederik~P Kingma and Max Welling.
\newblock Auto-encoding variational bayes.
\newblock {\em arXiv preprint arXiv:1312.6114}, 2013.

\bibitem{Kingma13}
Diederik~P Kingma and Max Welling.
\newblock Auto-encoding variational bayes.
\newblock {\em arXiv preprint arXiv:1312.6114}, 2013.

\bibitem{Kotz00}
S.~Kotz and S.~Nadarajah.
\newblock {\em Extreme value distributions: theory and applications}.
\newblock World Scientific Publishing Company, 2000.

\bibitem{kusner2017grammar}
Matt~J Kusner, Brooks Paige, and Jos{\'e}~Miguel Hern{\'a}ndez-Lobato.
\newblock Grammar variational autoencoder.
\newblock {\em arXiv preprint arXiv:1703.01925}, 2017.

\bibitem{Lake15}
Brenden~M Lake, Ruslan Salakhutdinov, and Joshua~B Tenenbaum.
\newblock Human-level concept learning through probabilistic program induction.
\newblock {\em Science}, 350(6266):1332--1338, 2015.

\bibitem{lawson2017learning}
Dieterich Lawson, Chung-Cheng Chiu, George Tucker, Colin Raffel, Kevin Swersky,
  and Navdeep Jaitly.
\newblock Learning hard alignments with variational inference.
\newblock In {\em 2018 IEEE International Conference on Acoustics, Speech and
  Signal Processing (ICASSP)}, pages 5799--5803. IEEE, 2018.

\bibitem{Liu15}
Ziwei Liu, Ping Luo, Xiaogang Wang, and Xiaoou Tang.
\newblock Deep learning face attributes in the wild.
\newblock In {\em Proceedings of the IEEE International Conference on Computer
  Vision}, pages 3730--3738, 2015.

\bibitem{Maddison16}
Chris~J. Maddison, Andriy Mnih, and Yee~Whye Teh.
\newblock {The Concrete Distribution: A Continuous Relaxation of Discrete
  Random Variables}.
\newblock In {\em International Conference on Learning Representations}, 2017.

\bibitem{McAllester10}
D.~McAllester, T.~Hazan, and J.~Keshet.
\newblock Direct loss minimization for structured prediction.
\newblock {\em Advances in Neural Information Processing Systems},
  23:1594--1602, 2010.

\bibitem{mena2018learning}
Gonzalo Mena, David Belanger, Scott Linderman, and Jasper Snoek.
\newblock Learning latent permutations with gumbel-sinkhorn networks.
\newblock In {\em International Conference on Learning Representations}, 2018.

\bibitem{Mnih14}
Andriy Mnih and Karol Gregor.
\newblock Neural variational inference and learning in belief networks.
\newblock {\em arXiv preprint arXiv:1402.0030}, 2014.

\bibitem{mordatch2017emergence}
Igor Mordatch and Pieter Abbeel.
\newblock Emergence of grounded compositional language in multi-agent
  populations.
\newblock In {\em Thirty-Second AAAI Conference on Artificial Intelligence},
  2018.

\bibitem{rabiner1986introduction}
Lawrence~R Rabiner and Biing-Hwang Juang.
\newblock An introduction to hidden markov models.
\newblock {\em ieee assp magazine}, 3(1):4--16, 1986.

\bibitem{Ranganath14}
Rajesh Ranganath, Sean Gerrish, and David Blei.
\newblock Black box variational inference.
\newblock In {\em Artificial Intelligence and Statistics}, pages 814--822,
  2014.

\bibitem{Rezende14}
Danilo~Jimenez Rezende, Shakir Mohamed, and Daan Wierstra.
\newblock Stochastic backpropagation and approximate inference in deep
  generative models.
\newblock In {\em Proceedings of the 31st International Conference on Machine
  Learning}, volume~32, pages 1278--1286, 2014.

\bibitem{Roeder17}
Geoffrey Roeder, Yuhuai Wu, and David~K Duvenaud.
\newblock Sticking the landing: Simple, lower-variance gradient estimators for
  variational inference.
\newblock In {\em Advances in Neural Information Processing Systems}, pages
  6925--6934, 2017.

\bibitem{Rolfe16}
Jason~Tyler Rolfe.
\newblock Discrete variational autoencoders.
\newblock {\em arXiv preprint arXiv:1609.02200}, 2016.

\bibitem{Salakhutdinov08}
Ruslan Salakhutdinov and Iain Murray.
\newblock On the quantitative analysis of deep belief networks.
\newblock In {\em Proceedings of the 25th international conference on Machine
  learning}, pages 872--879. ACM, 2008.

\bibitem{Shen18}
Dinghan Shen, Qinliang Su, Paidamoyo Chapfuwa, Wenlin Wang, Guoyin Wang,
  Lawrence Carin, and Ricardo Henao.
\newblock Nash: Toward end-to-end neural architecture for generative semantic
  hashing.
\newblock {\em arXiv preprint arXiv:1805.05361}, 2018.

\bibitem{Song16}
Y.~Song, A.~G. Schwing, R.~Zemel, and R.~Urtasun.
\newblock {Training Deep Neural Networks via Direct Loss Minimization}.
\newblock In {\em Proc. ICML}, 2016.

\bibitem{Tucker17}
George Tucker, Andriy Mnih, Chris~J Maddison, John Lawson, and Jascha
  Sohl-Dickstein.
\newblock Rebar: Low-variance, unbiased gradient estimates for discrete latent
  variable models.
\newblock In {\em Advances in Neural Information Processing Systems}, pages
  2624--2633, 2017.

\bibitem{Vahdat18b}
Arash Vahdat, Evgeny Andriyash, and William Macready.
\newblock Dvae\#: Discrete variational autoencoders with relaxed boltzmann
  priors.
\newblock In {\em Advances in Neural Information Processing Systems}, pages
  1864--1874, 2018.

\bibitem{Vahdat18}
Arash Vahdat, William~G Macready, Zhengbing Bian, and Amir Khoshaman.
\newblock Dvae++: Discrete variational autoencoders with overlapping
  transformations.
\newblock {\em arXiv preprint arXiv:1802.04920}, 2018.

\bibitem{Williams92}
Ronald~J Williams.
\newblock Simple statistical gradient-following algorithms for connectionist
  reinforcement learning.
\newblock In {\em Reinforcement Learning}, pages 5--32. Springer, 1992.

\bibitem{Xiao17}
Han Xiao, Kashif Rasul, and Roland Vollgraf.
\newblock Fashion-mnist: a novel image dataset for benchmarking machine
  learning algorithms, 2017.

\bibitem{Yin19}
Mingzhang Yin and Mingyuan Zhou.
\newblock {ARM}: Augment-{REINFORCE}-merge gradient for stochastic binary
  networks.
\newblock In {\em International Conference on Learning Representations}, 2019.

\bibitem{yogatama2016learning}
Dani Yogatama, Phil Blunsom, Chris Dyer, Edward Grefenstette, and Wang Ling.
\newblock Learning to compose words into sentences with reinforcement learning.
\newblock {\em arXiv preprint arXiv:1611.09100}, 2016.

\end{thebibliography}
\bibliographystyle{plain}

\newpage
\appendix
%!TEX root = neurips_2019.tex

\noindent\makebox[\linewidth]{\rule{\linewidth}{3.5pt}}
\begin{center}
	\bf{\Large Supplementary material: Direct Optimization through $\arg \max$ for Discrete Variational Auto-Encoder
}
\end{center}
\noindent\makebox[\linewidth]{\rule{\linewidth}{1pt}}

\begin{theorem}
\label{theorem:grad}
Assume $h_\phi(x,z)$ is a smooth function of $\phi$. Let $\zopt \triangleq \arg \max_{\hat z} \{h_\phi(x, \hat z) + \gamma(\hat z)\}$ and $\zdirect \triangleq \arg \max_{\hat z} \{\epsilon f_\theta(x,\hat z) + h_\phi(x, \hat z) + \gamma(\hat z)\}$ be two random variables. Then  
\[
\nabla_\phi E_{\gamma} [f_\theta(x, \zopt)] =  \lim_{\epsilon \rightarrow 0} \frac{1}{\epsilon} \Big(E_{\gamma} [\nabla_\phi h_{\phi}(x, \zdirect) - \nabla_\phi h_{\phi}(x, \zopt)]\Big) \label{eq:direct}
\]
\end{theorem}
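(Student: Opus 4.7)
The plan is to follow the three-step outline around the prediction generating function $G(\phi,\epsilon) = E_\gamma[\max_{\hat z}\{\epsilon f_\theta(x,\hat z) + h_\phi(x,\hat z) + \gamma(\hat z)\}]$, leveraging a closed-form expression that makes the smoothness of $G$ essentially free. The key observation is the classical Gumbel max-value identity: for independent zero-mean Gumbel variates $\gamma(\hat z)$,
\[
E_\gamma\Bigl[\max_{\hat z}\{\mu(\hat z) + \gamma(\hat z)\}\Bigr] \;=\; \log\sum_{\hat z} e^{\mu(\hat z)} + c,
\]
which, applied with $\mu(\hat z) = \epsilon f_\theta(x,\hat z) + h_\phi(x,\hat z)$, yields
\[
G(\phi,\epsilon) \;=\; \log\sum_{\hat z \in \mathcal{Z}} e^{\epsilon f_\theta(x,\hat z) + h_\phi(x,\hat z)} + c.
\]
Since $h_\phi$ is smooth in $\phi$, this expression is $C^\infty$ jointly in $(\phi,\epsilon)$, so step (i) is immediate and Clairaut's theorem delivers $\partial_\phi\partial_\epsilon G = \partial_\epsilon\partial_\phi G$.

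For step (ii), I would differentiate the log-sum-exp in $\epsilon$ and evaluate at $\epsilon=0$; the resulting softmax weights are exactly the Gibbs posterior $q_\phi(z\mid x) = e^{h_\phi(x,z)}$ from Equation (\ref{eq:pm}), so $\partial_\epsilon G(\phi,0) = \sum_z q_\phi(z\mid x)\, f_\theta(x,z) = E_\gamma[f_\theta(x, \zopt)]$. Applying $\nabla_\phi$ then produces the left-hand side of the theorem. For step (iii), I would instead differentiate $G$ in $\phi$ first, using the envelope theorem on the max inside the expectation, which is valid because the $\arg\max$ is almost surely unique under the continuous Gumbel distribution; this gives $\partial_\phi G(\phi,\epsilon) = E_\gamma[\nabla_\phi h_\phi(x, \zdirect)]$. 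The mixed partial is then the one-sided difference quotient in $\epsilon$,
\[
\partial_\epsilon\,\partial_\phi G(\phi,0) \;=\; \lim_{\epsilon\to 0}\frac{1}{\epsilon}\Bigl(E_\gamma[\nabla_\phi h_\phi(x, \zdirect)] - E_\gamma[\nabla_\phi h_\phi(x, \zopt)]\Bigr),
\]
which is exactly the right-hand side. Equating the two mixed partials via step (i) closes the argument.

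The main obstacle I anticipate is justifying the derivative-expectation interchanges cleanly enough to avoid the ``general position'' and regularity assumptions used by McAllester and Song. Two ingredients are needed: (a) almost-sure uniqueness of the argmax, which follows from the fact that the set of $\gamma$ producing ties has Lebesgue measure zero under the continuous Gumbel density, so $\max_{\hat z}\{\cdots\}$ is almost surely differentiable in both $\phi$ and $\epsilon$; and (b) integrability/dominated-convergence bounds that let us pull $\nabla_\phi$ and the $\epsilon$-limit inside the expectation, supplied by the smoothness of $h_\phi$ together with the rapid tail decay of the Gumbel density and by the fact that $\epsilon \mapsto \max_{\hat z}\{\epsilon f_\theta + h_\phi + \gamma\}$ is convex and piecewise linear with slopes in the bounded set $\{f_\theta(x,\hat z): \hat z \in \mathcal{Z}\}$. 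These are exactly the technicalities that the closed-form log-sum-exp representation of $G$ lets us sidestep, yielding the promised ``alternative proof without regularity assumptions.''
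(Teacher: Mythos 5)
Your proof is correct and shares the paper's overall skeleton (the prediction generating function $G(\phi,\epsilon)$, symmetry of mixed partials, and the identification of each mixed partial with one side of the claimed identity), but you execute the crucial smoothness step by a genuinely different and cleaner route. The paper establishes smoothness of $G$ via a change of variables $\hat\gamma = \epsilon f_\theta + h_\phi + \gamma$ inside the integral and then invokes Danskin's theorem together with differentiation under the integral (citing Folland) to compute $\partial_\epsilon G$. You instead observe that the Gumbel max-value identity evaluates $G$ in closed form as a log-sum-exp of the potentials $\epsilon f_\theta + h_\phi$, which makes step (i) immediate and reduces step (ii) to differentiating a softmax and recognizing the weights as $\P_\gamma[\zopt = z]$. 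This buys a real simplification: the regularity technicalities the paper has to handle explicitly are absorbed into a classical identity. Two small remarks. First, for the \emph{zero-mean} Gumbel used in the paper the identity reads $E_\gamma[\max_{\hat z}\{\mu(\hat z)+\gamma(\hat z)\}] = \log\sum_{\hat z} e^{\mu(\hat z)}$ with no additive $c$ (the $+c$ you wrote belongs to the standard, location-zero Gumbel); this is harmless since the constant vanishes under differentiation. Second, you could push your own idea further in step (iii): differentiating the closed form in $\phi$ gives $\partial_\phi G(\phi,\epsilon) = \sum_z \frac{e^{\epsilon f_\theta(x,z)+h_\phi(x,z)}}{\sum_{\hat z} e^{\epsilon f_\theta(x,\hat z)+h_\phi(x,\hat z)}}\nabla_\phi h_\phi(x,z) = E_\gamma[\nabla_\phi h_\phi(x,\zdirect)]$ directly, again by the Gumbel-max identity applied to the perturbed potentials, so the envelope-theorem and dominated-convergence machinery you flag as the ``main obstacle'' is not actually needed anywhere; as written your hybrid argument is still valid, since a.s.\ uniqueness of the $\arg\max$ and the bounded slopes on a finite $\mathcal{Z}$ do justify the interchange.
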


\begin{proof}
We use a ``prediction generating function" $G(\phi,\epsilon) = E_{\gamma} [\max_{\hat z} \{\epsilon f_\theta(x,\hat z) + h_\phi(x,\hat z) + \gamma(\hat z) \}]$, whose derivatives are functions of the predictions $\zopt, \zdirect$. 
The proof is composed from three steps: 
\begin{enumerate} 
\item We prove that $G(\phi,\epsilon)$ is a smooth function of $\phi,\epsilon$. Therefore, the Hessian of $G(\phi,\epsilon)$ exists and it is symmetric, namely 
\[
\partial_\phi \partial_\epsilon G(\phi, \epsilon) = \partial_\epsilon \partial_\phi G(\phi, \epsilon). \label{eq:H}
\] 
\item We show that encoder gradient is apparent in the Hessian: 
\[
\partial_\phi \partial_\epsilon G(\phi, 0)  =  \nabla_\phi E_{\gamma} [\theta(x, \zopt)]. \label{eq:H1}
\]
\item We derive our update rule as the complement representation of the Hessian: 
\[
\partial_\epsilon \partial_\phi G(\phi, 0) =  \lim_{\epsilon \rightarrow 0} \frac{1}{\epsilon} \Big(E_{\gamma} [\nabla_\phi h(x, \zdirect) - \nabla_\phi h(x, \zopt)]\Big) \label{eq:H2}
\]
\end{enumerate}

First, we prove that $G(\phi,\epsilon)$ is a smooth function. Recall, $g(\gamma) = \prod_{z=1}^k e^{-(\gamma(z) + c + e^{-(\gamma(z) + c)})}$ is the zero mean Gumbel probability density function. Applying a change of variable $\hat \gamma(z) = \epsilon f_\theta(x,\hat z) + h_\phi(x,\hat z) + \gamma(\hat z)$, we obtain 
\[
G(\phi,\epsilon) = \int_{\R^k} g(\gamma) \max_{\hat z} \{\epsilon f_\theta(x,\hat z) + h_\phi(x,\hat z) + \gamma(\hat z) \} d\gamma = \int_{\R^k} g(\hat \gamma - \epsilon  f_\theta - h_\phi )  \max_{\hat z} \{\hat \gamma(\hat z) \} d\hat \gamma. \nonumber
\]
Since $g(\hat \gamma - \epsilon  f_\theta - h_\phi )$ is a smooth function of $\epsilon$ and $h_\phi(x,z)$ and $h_\phi(x,z)$ is a smooth function of $\phi$, we conclude that $G(\phi,\epsilon)$ is a smooth function of $\phi,\epsilon$. Therefore, the Hessian of $G(\phi,\epsilon)$ exists and symmetric, i.e., $\partial_\phi \partial_\epsilon G(\phi, \epsilon) = \partial_\epsilon \partial_\phi G(\phi, \epsilon)$. We thus proved Equation (\ref{eq:H}).  

To prove Equations (\ref{eq:H1}) and (\ref{eq:H2}) we differentiate under the integral, both with respect to $\epsilon$ and with respect to $\phi$. We are able to differentiate under the integral, since $g(\hat \gamma - \epsilon  f_\theta - h_\phi)$ is a smooth function of $\epsilon$ and $\phi$ and its gradient is bounded by an integrable function (cf. \cite{Folland99}, Theorem 2.27, using the continuity of the max function). 

We turn to prove Equation (\ref{eq:H1}). We begin by noting that $\max_{\hat z} \{\epsilon f_\theta(x,\hat z) + h_\phi(x,\hat z) + \gamma(\hat z) \}$ is a maximum over linear function of $\epsilon$, thus by Danskin Theorem (cf. \cite{Bertsekas03}, Proposition 4.5.1) holds $\partial_\epsilon (\max_{\hat z} \{\epsilon f_\theta(x,\hat z) + h_\phi(x,\hat z) + \gamma(\hat z) \}) = f_\theta(x,\zdirect)$. By differentiating under the integral, $\partial_\epsilon G(\phi,\epsilon) =  \E_\gamma [f_\theta(x,\zdirect)]$. We obtain Equation (\ref{eq:H1}) by differentiating under the integral, now with respect to $\phi$, and setting $\epsilon=0$.   

Finally, we turn to prove Equation (\ref{eq:H2}). By differentiating under the integral $\partial_\phi G(\phi, \epsilon) =  \E_{\gamma} [\nabla_\phi h_\phi(x,\zdirect)]$. Equation (\ref{eq:H2}) is attained by taking the derivative with respect to $\epsilon = 0$ on both sides. 

The theorem follows by combining Equation (\ref{eq:H}) when $\epsilon=0$, i.e., $\partial_\phi \partial_\epsilon G(\phi, 0) = \partial_\epsilon \partial_\phi G(\phi, 0)$ with the equalities in Equations (\ref{eq:H1}) and (\ref{eq:H2}). 
\end{proof}

\begin{figure}[t]
\centering
\begin{tabular}{ccc}
\includegraphics[width=0.3\linewidth]{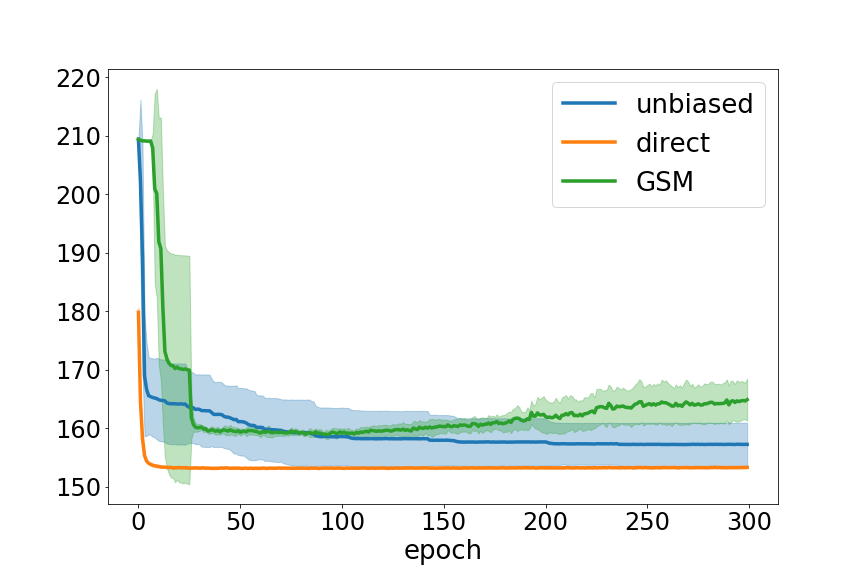} &
\includegraphics[width=0.3\linewidth]{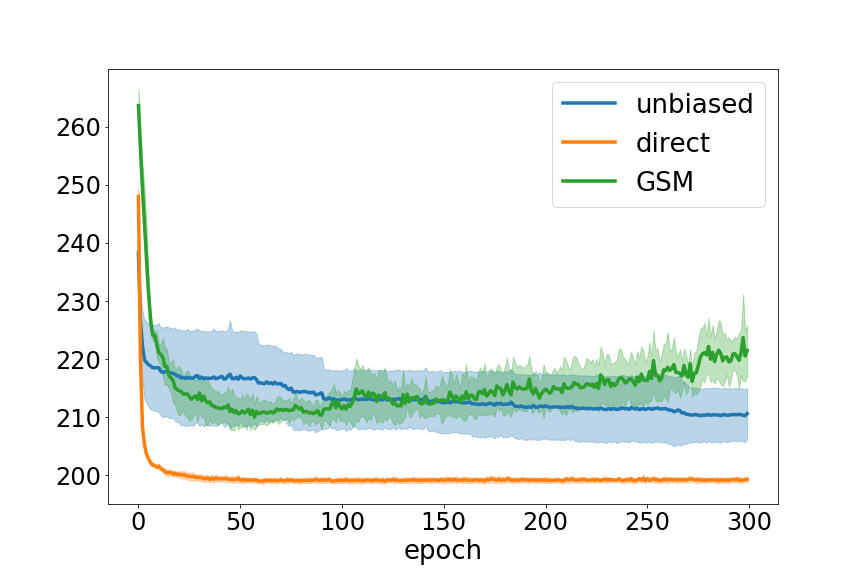} &
\includegraphics[width=0.3\linewidth]{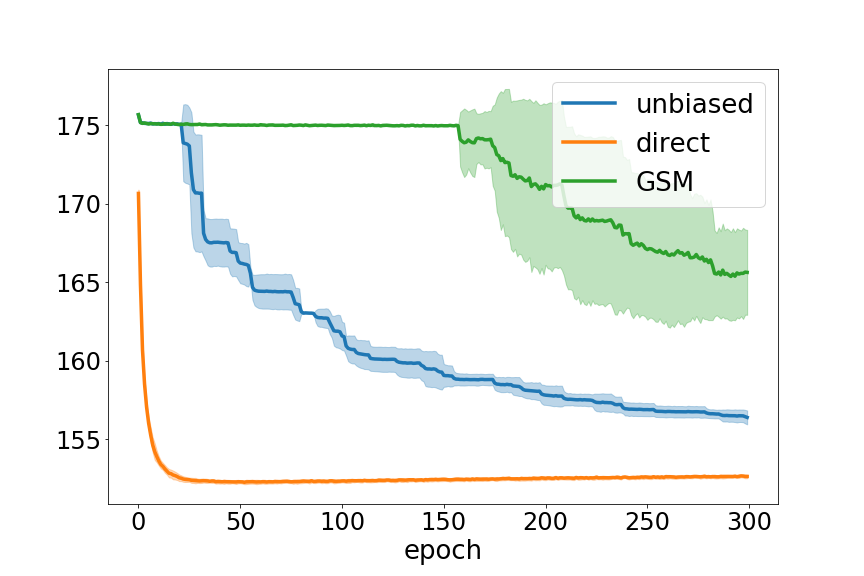} \\

\includegraphics[width=0.3\linewidth]{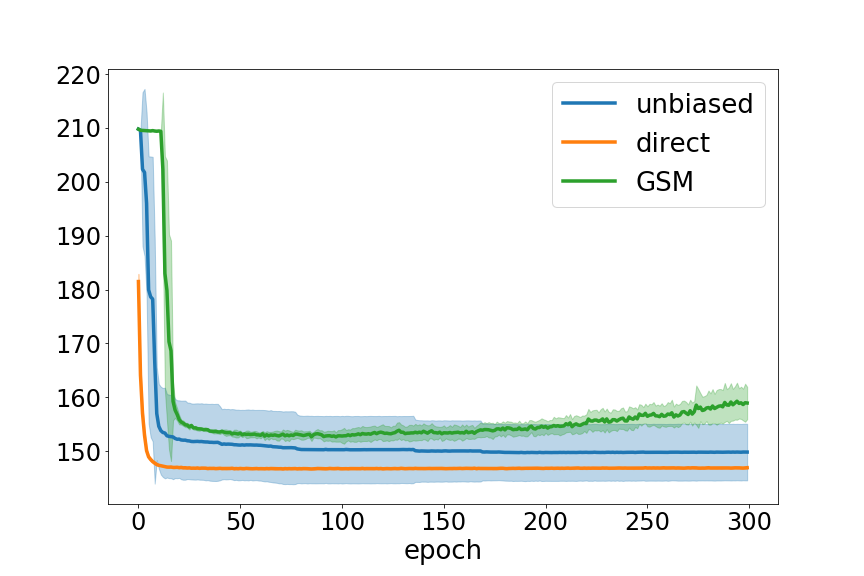} &
\includegraphics[width=0.3\linewidth]{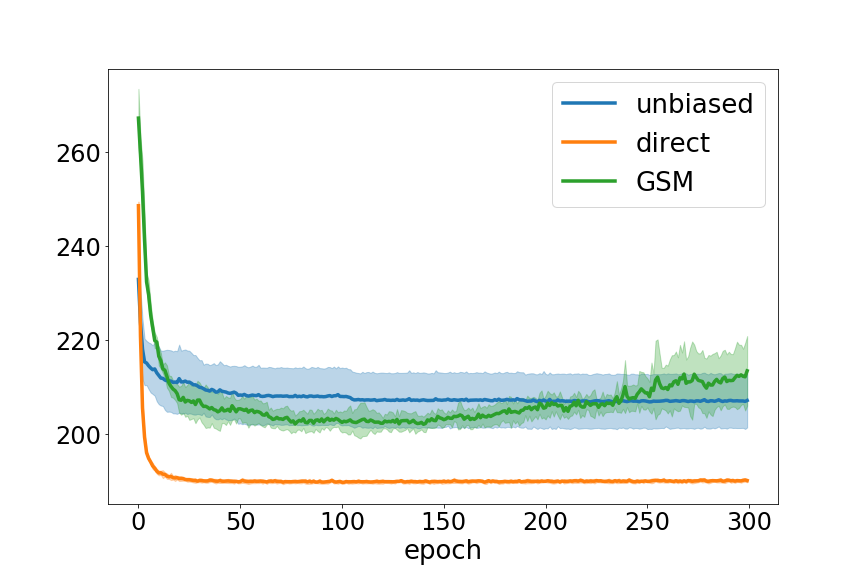} &
\includegraphics[width=0.3\linewidth]{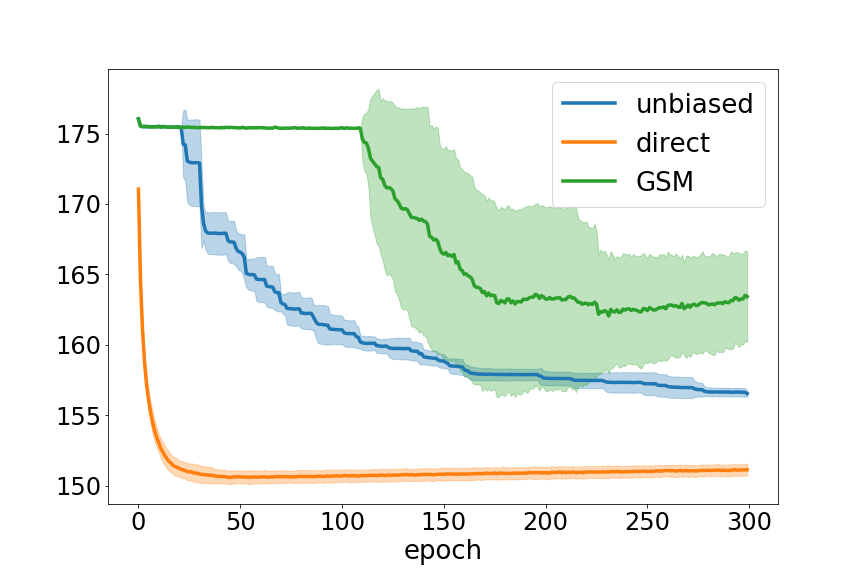} \\

\includegraphics[width=0.3\linewidth]{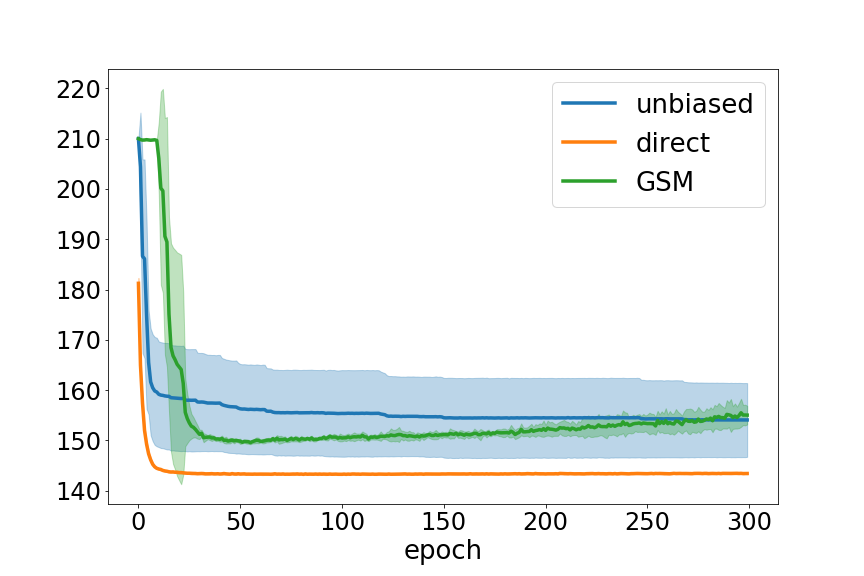} &
\includegraphics[width=0.3\linewidth]{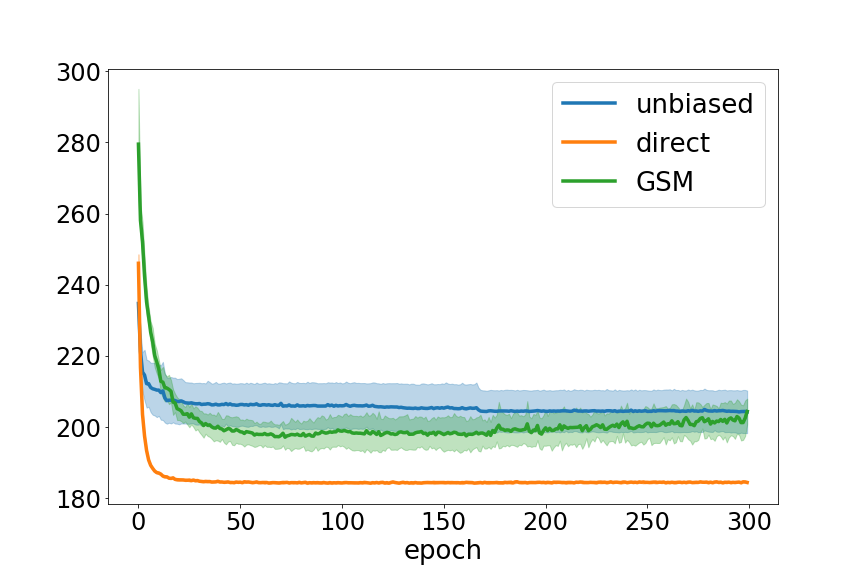} &
\includegraphics[width=0.3\linewidth]{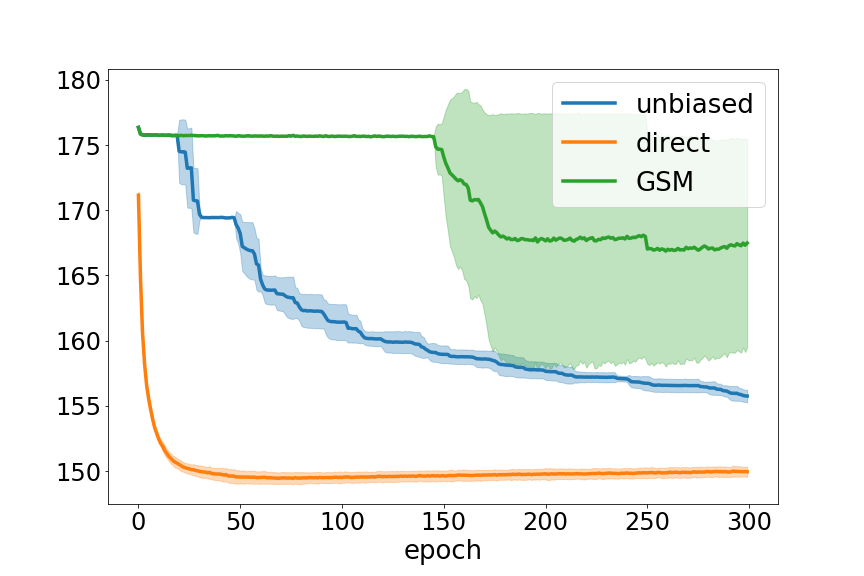} \\

\includegraphics[width=0.3\linewidth]{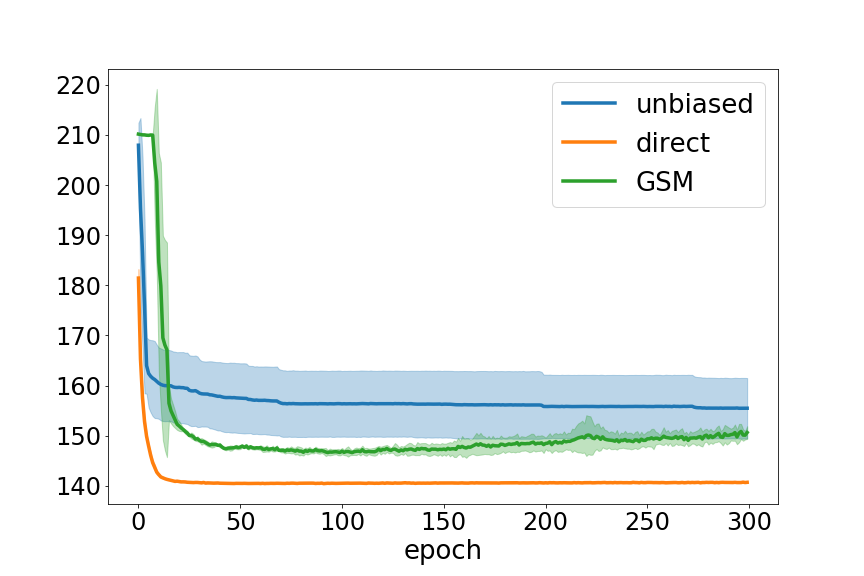} &
\includegraphics[width=0.3\linewidth]{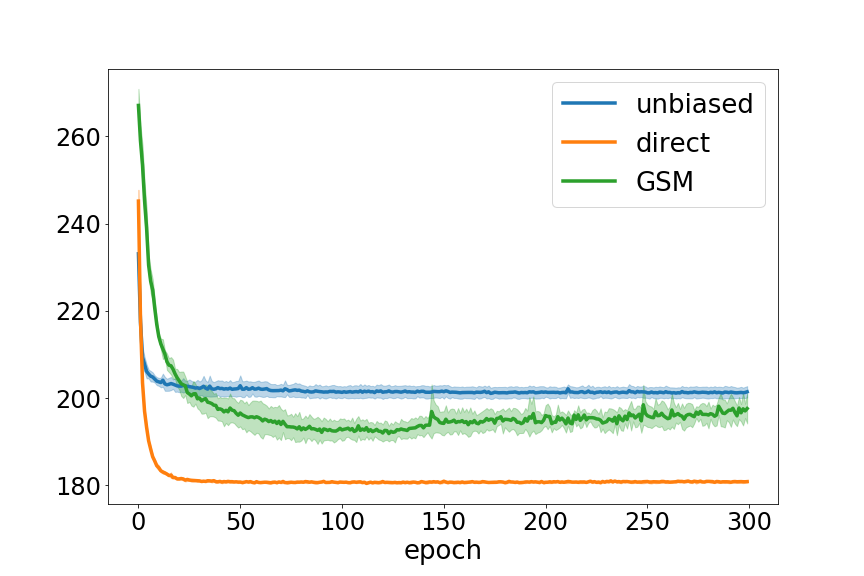} &
\includegraphics[width=0.3\linewidth]{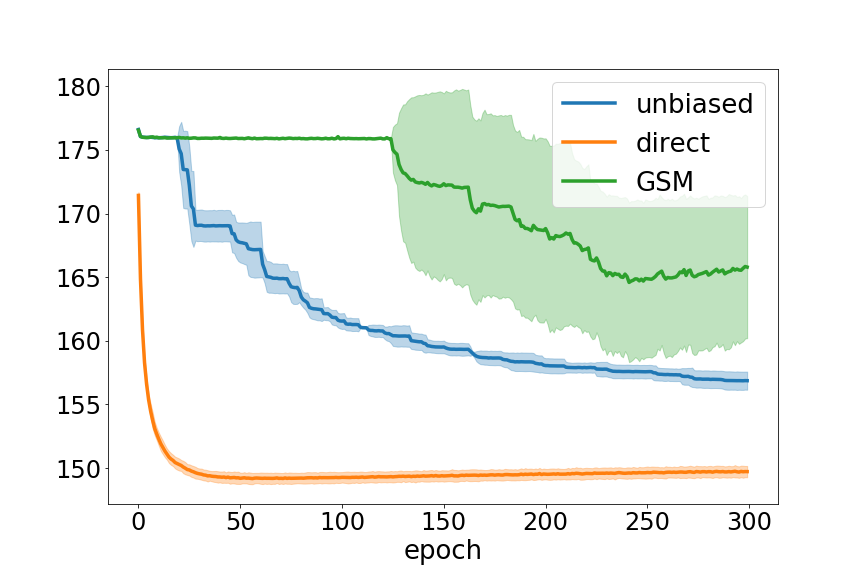} \\
\vspace{-0.5cm} 
\end{tabular}
\caption{\footnotesize Test loss for $k=20,30,40,50$ (left: MNIST, middle: Fashion-MNIST, right: Omniglot)}
	\label{fig:allk}
\end{figure}

\section{Gumbel-Max perturbation model and the Gibbs distribution}
\label{ap:pm} 

\begin{theorem} \cite{Gumbel54, Luce59, McFadden73}
Let $\gamma$ be a random function that associates random variable $\gamma(z)$ for each $z=1,...,k$ whose distribution follows the zero mean Gumbel distribution law, i.e., its probability density function is $g(t) = e^{-(t + c + e^{-(t + c)})}$ for the Euler constant $c \approx 0.57$. Then 
\[
&&\frac{e^{h_\phi(x,z)}}{\sum_{\hat z} e^{h_\phi(x,\hat z)}}  = \P_{\gamma \sim g}[z = \zopt], \nonumber \\
&&\mbox{ where } \zopt \triangleq \arg \max_{\hat z=1,...,k} \{h_\phi(x, \hat z) + \gamma(\hat z)\} 
\]
\end{theorem}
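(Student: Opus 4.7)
The plan is to compute $\P[\zopt = z]$ directly by conditioning on $\gamma(z)$ and integrating out the other Gumbel variables. Writing $F(s) = \exp(-e^{-(s+c)})$ for the Gumbel CDF, on the event $\gamma(z) = t$ the condition $\zopt = z$ is equivalent to $\gamma(\hat z) < h_\phi(x,z) - h_\phi(x,\hat z) + t$ for every $\hat z \neq z$. Since the $\gamma(\hat z)$ are independent, this conditional probability factorizes as $\prod_{\hat z \neq z} F\bigl(h_\phi(x,z) - h_\phi(x,\hat z) + t\bigr)$. Because of the footnote on ties having Lebesgue measure zero under a continuous law, strict-versus-weak inequalities make no difference here.

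Next, I would exploit the double-exponential form of $F$ to collapse the product into a single exponential: since each factor is of the form $\exp(-e^{-(t+c)} \cdot e^{h_\phi(x,\hat z) - h_\phi(x,z)})$, the product over $\hat z \neq z$ becomes
\[
\exp\!\left(-e^{-(t+c)} \sum_{\hat z \neq z} e^{h_\phi(x,\hat z) - h_\phi(x,z)}\right).
\]
Multiplying by the density $g(t) = e^{-(t+c)} \exp(-e^{-(t+c)})$ and integrating over $t \in \R$ yields $\P[\zopt = z]$. The key observation is that the $\exp(-e^{-(t+c)})$ factor coming from $g$ itself can be absorbed into the above product as the missing $\hat z = z$ term (for which $e^{h_\phi(x,\hat z) - h_\phi(x,z)} = 1$), promoting the partial sum $\sum_{\hat z \neq z}$ to the full partition-style sum $A \triangleq \sum_{\hat z} e^{h_\phi(x,\hat z) - h_\phi(x,z)}$.

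The integral then reduces, via the substitution $u = e^{-(t+c)}$ (so $du = -e^{-(t+c)}\,dt$), to $\int_0^\infty e^{-Au}\,du = 1/A$. Multiplying numerator and denominator by $e^{h_\phi(x,z)}$ gives exactly the softmax expression $e^{h_\phi(x,z)} / \sum_{\hat z} e^{h_\phi(x,\hat z)}$, which is the claim.

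The main obstacle is purely bookkeeping rather than anything analytic: one must recognize that the stray $\exp(-e^{-(t+c)})$ factor from the density $g$ is precisely the missing diagonal term needed to complete the sum into the full partition function. Convergence of the integral is not an issue since the integrand is super-exponentially decaying at both ends, and differentiation/interchange arguments are not required. Once the completion trick is performed, the substitution $u = e^{-(t+c)}$ is forced by the structure of the Gumbel law and the remainder is a one-line computation.
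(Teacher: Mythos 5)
Your proof is correct and follows essentially the same route as the paper's: condition on one Gumbel coordinate, factor the event $\{\zopt=z\}$ into a product of Gumbel CDFs over $\hat z\neq z$, and use the identity $g(t)=e^{-(t+c)}G(t)$ to absorb the density's double-exponential factor as the missing diagonal term, pulling out $e^{h_\phi(x,z)}$. The only divergence is the final step: the paper stops at $e^{h_\phi(x,z)}/Z$ and identifies $Z$ by the fact that the probabilities must sum to one, whereas you evaluate the normalizing integral explicitly via the substitution $u=e^{-(t+c)}$, which is marginally more self-contained but changes nothing essential.
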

\begin{proof} 
Let  $G(t) = e^{-e^{-(t + c)}}$ be the Gumbel cumulative distribution function. Then 
\[
\P_{\gamma \sim g}[z = \zopt] &=& \P_{\gamma \sim g}[z = \arg \max _{\hat z = 1,..,k} \{h_\phi(x, \hat z)+ \gamma(\hat z)\}] \nonumber \\
&=& \int g(t-\phi(x, z)) \prod_{\hat z \ne z} G(t-h_\phi(x, \hat z)) dt \nonumber 
\]
Since $g(t) = e^{-(t+c)} G(t)$ it holds that 
\[
&&\int g(t-h_\phi(z)) \prod_{\hat z \ne z} G(t-h_\phi(\hat z)) dt \\
&&= \int e^{-(t-h_\phi(x, z)+c)} G(t-h_\phi(x, z)) \prod_{\hat z \ne z} G(t-h_\phi(x, \hat z)) dt \nonumber \\
&&= \frac{e^{h_\phi(x, z)}}{Z}
\]
where $\frac{1}{Z} =  \int e^{-(t+c)} \prod_{\hat z=1}^k G(t-h_\phi(\hat z)) dt$ is independent of $z$. Since $\P_{\gamma \sim g}[z = \zopt] $ is a distribution then $Z$ must equal to $\sum_{\hat z=1}^k e^{h_\phi(x,\hat z)}$. 
\end{proof}

\end{document}